\newcommand\Func[2]{%
	\textbf{function} #1
	\algocf@group{#2}%
}
\newcommand\Forr[2]{%
	\textbf{for} #1 \textbf{do}%
	\algocf@group{#2}%
}
\newcommand\Blnk[2]{%
	\hspace{10pt}#1
	\algocf@group{#2}%
	\textbf{end}
}
\newcommand{\removelatexerror}{\let\@latex@error\@gobble}
\def\@endtheorem{\endtrivlist}
\newtheorem{theorem}{Theorem}
\newtheorem{definition}{Definition}
\newtheorem{proposition}{Proposition}
\newtheorem{remark}{Remark}
\newtheorem{corollary}{Corollary}
\newtheorem{problem}{Problem}
\newtheorem{subproblem}{Problem}[problem] 
\date{\today}
\newcommand{\nn}{{\mathscr{N}}\negthinspace}
\newcommand{\ou}{%
  \mathrel{%
	\vcenter{\offinterlineskip
	  \ialign{##\cr$<$\cr\noalign{\kern-1.5pt}$>$\cr}%
	}%
  }%
}%
\renewcommand{\marginnote}[1]{}
\newcommand{\overbar}[1]{\mkern 3.0mu\overline{\mkern-2.5mu#1\mkern-2.0mu}\mkern 2.0mu}
\newcommand\mathbftt[1]{\textnormal{\ttfamily\bfseries #1}}
\newcommand{\nnvec}{\nn_{\negthickspace f}}
\newcommand{\nnbf}{\nn_{\text{BF}}}
\newcommand{\nncomp}{\nn_{\negthickspace\text{BF}\circ\negthinspace f}}
\newcommand{\nnol}{\nn_{\negthickspace\text{O}}}
\newcommand{\nncontroller}{\nn_{\negthickspace\text{c}}}
\newcommand{\zsub}[1]{\mathcal{Z}_{\scriptscriptstyle \leq}(#1)}
\newcommand{\zsup}[1]{\mathcal{Z}_{\scriptscriptstyle \geq}(#1)}
\newcommand{\zeq}[1]{\mathcal{Z}_{\scriptscriptstyle =}(#1)}
\newcommand{\fliph}[1]{\mathfrak{F}(#1)}
\newcommand{\unfliph}[1]{\mathfrak{U}(#1)}
\newcommand{\flipset}[1]{\mathfrak{F}_{\scriptscriptstyle \{\negthinspace\cdot\negthinspace\}}\negthinspace(#1)}
\newcommand{\unflipset}[1]{\mathfrak{U}_{\scriptscriptstyle \{\negthinspace\cdot\negthinspace\}}\negthinspace(#1)}
\newcommand{\acthyper}[2][\@nil]{
\def\tmp{#1}
\ifx\tmp\@nnil
	\ensuremath{\mathcal{a}}_{#2}
\else
	\ensuremath{\mathcal{a}}_{#1\thinspace|#2}
\fi}
\newcommand{\removeonelatexerror}{\let\@latex@error\@gobble}
\title{ %
Extracting Forward Invariant Sets from Neural Network-Based Control 
Barrier Functions %
} %
 \author{Goli Vaisi$^{*}$}
 \affiliation{
 	\institution{University of California, Irvine}
 	\department{Dept. of Electrical Engineering and Computer Science}
 	\city{Irvine}
 	\state{CA}
 	\country{USA}
 }
 \email{gvaisi@uci.edu}
 \author{James Ferlez$^{*}$}
 \affiliation{
 	\institution{University of California, Irvine}
 	\department{Dept. of Electrical Engineering and Computer Science}
 	\city{Irvine}
 	\state{CA}
 	\country{USA}
 }
 \email{jferlez@uci.edu}
 \author{Yasser Shoukry}
 \affiliation{
 	\institution{University of California, Irvine}
 	\department{Dept. of Electrical Engineering and Computer Science}
 	\city{Irvine}
 	\state{CA}
 	\country{USA}
 }
 \email{yshoukry@uci.edu}
\begin{document}


\begin{abstract}
Training Neural Networks (NNs) to serve as Barrier Functions (BFs) is a popular 
way to improve the safety of autonomous dynamical systems. Despite significant 
practical success, these methods are not generally guaranteed to produce true  
BFs in a provable sense, which undermines their intended use as safety 
certificates. In this paper, we consider the problem of formally certifying a 
learned NN as a BF with respect to state avoidance for an autonomous system: 
viz. computing a region of the state space on which the candidate NN is 
provably a BF. In particular, we propose a sound algorithm that 
efficiently produces such a certificate set for a shallow NN. Our algorithm 
combines two novel approaches: it first uses NN reachability tools to identify 
a subset of states for which the output of the NN does not increase along 
system trajectories;
then, it uses a novel enumeration algorithm for hyperplane arrangements to find 
the intersection of the NN's zero-sub-level set with the first set of states. 
In this way, our algorithm soundly finds a subset of states on which the NN  is 
certified as a BF. We further demonstrate the effectiveness of our algorithm at 
certifying for real-world NNs as BFs in two case studies. We complemented these 
with scalability experiments that demonstrate the efficiency of our 
algorithm.


\end{abstract}

\maketitle


\section{Introduction}
\label{sec:introduction}

Learning-enabled components, especially Neural Networks (NNs), have 
demonstrated incredible success at controlling autonomous systems. However, 
these components generally lack formal safety guarantees, which has inspired 
efforts to learn not just NN controllers, but also NN certificates of their 
safety. This approach has proven immensely successful at improving safety in 
practice, and at less computational cost than more rigorous methods. 
Unfortunately, learning safety certificates \emph{also} lacks formal 
guarantees, just as it does for learning controllers: i.e., attempts at 
learning safety certificates generally do not provide certificates that 
\emph{formally} assure safety. Nevertheless, the practical success of these 
methods suggests that learned safety certificates are good candidates for 
formal certification in  their own right.
\renewcommand*{\thefootnote}{} %
\footnote{$^{*}$Equally contributing authors.} %
\renewcommand*{\thefootnote}{\arabic{footnote}}
\setcounter{footnote}{0}

In this paper, we present an algorithm that can formally certify a NN as a 
Barrier Function (BF) for an autonomous, discrete-time dynamical systems. In 
particular, we propose a \emph{sound} algorithm that attempts to find a (safe) 
subset of the state space on which a given NN can be certified as a BF.  
Despite the overall goal of safety \emph{certification}, a sound algorithm is 
well-suited to this problem even though it is not guaranteed to return a safety 
certificate. On the one hand, a sound algorithm can be more efficient than a 
complete one, which complements the (relative) efficiency of learning  
certificates. On the other hand, the algorithm is intended to start with a NN 
that is already \emph{trained} to be a BF -- and hence it is likely that the NN 
can actually be certified as such; we show by case studies that this is indeed 
the case in practice. Hence, we propose an efficient algorithm that also likely 
to produce a safety certificate.

As a matter of computational efficiency, we base our algorithm on two 
structural assumptions, both of which facilitate more efficient BF 
certification. First, we assume that the learned BF candidate is a shallow 
Rectified Linear Unit (ReLU) NN. This assumption does not compromise the 
expressivity of the candidate NN 
\cite{HornikApproximationCapabilitiesMultilayer1991a}, but it implies the NN's 
linear regions are specified by a hyperplane arrangement (see  
\cref{sec:preliminaries}). As a result, we can leverage a novel and efficient 
algorithm for hyperplane arrangements (see 
\cref{sec:hyperplane_region_enumeration}). Second, we assume that the system 
dynamics are realized by a ReLU NN vector field; this implies that the 
(functional) composition of the candidate BF NN with the system dynamics is 
itself a ReLU NN. Hence, we can leverage state-of-the-art NN verification 
tools, such as {CROWN} \cite{ZhangEfficientNeuralNetwork2018}, to reason about 
this composition. Moreover, this assumption is motivated by the common use of 
ReLU NNs as controllers, which in turn inspired the choice of ReLU NNs to 
represent controlled vector fields (so the closed-loop system is also a ReLU 
NN).

Thus, our proposed algorithm takes as input a ReLU NN system dynamics, $\nnvec 
: \mathbb{R}^n \rightarrow \mathbb{R}^n$, a shallow NN trained as a BF, $\nnbf 
: \mathbb{R}^n \rightarrow \mathbb{R}$, and a set of safe states $X_s  \subset 
\mathbb{R}^n$; it then uses roughly the following two-step procedure to find a 
subset of the state space on which  $\nnbf$ can be certified as a BF for 
$\nnvec$.
\begin{enumerate}[left=0.1\parindent,label={\itshape (\roman*)}]
	\item Find a set $X_\partial \subseteq X_s$, on which $\nnbf$ decreases 
		along trajectories of $\nnvec$: i.e. for $x \in \mathcal{X}$, 
		$\nnbf(\nnvec(x)) - \gamma \nnbf(x) \leq 0$ for some $\gamma > 0$. By 
		assumption, $\nnbf \circ  \nnvec$ is a ReLU NN, so a NN 
		forward-reachability tool can be used to produce a set satisfying the 
		inequality above. See \cref{sec:forward_reachability}.

	\item Identify $X_c \subseteq X_\partial$, a connected component of 
		$\zsub{\nnbf} \triangleq \{x : \nnbf(x) \leq 0\}$, that lies entirely 
		within $X_c$ as provided by \emph{(i)}. This step entails reasoning 
		about the zero crossings of the shallow NN $\nnbf$, for which develop a 
		novel algorithm based on properties of hyperplane arrangements. See 
		\cref{sec:hyperplane_region_enumeration}.
\end{enumerate}
\noindent By the properties of a BF (and the additional condition \emph{(iv)} 
of \cref{prob:main_problem}), $\nnbf$ is certified as a BF on any set $X_c$  as 
above.




\noindent \textbf{Related work:} 
The most directly related works are 
\cite{WangSimultaneousSynthesisVerification2023,  
ZhangExactVerificationReLU2023, ChenVerificationAidedLearningNeural2024, 
ZhaoFormalSynthesisNeural2023}, though all but 
\cite{ChenVerificationAidedLearningNeural2024} consider continuous time 
systems. \cite{WangSimultaneousSynthesisVerification2023} certifies only the 
invariance of a safe set: it doesn't resolve which subset of safe states is 
actually invariant (see \cref{sec:hyperplane_region_enumeration}). 
\cite{ZhangExactVerificationReLU2023} attempts to find the zero-level set of a 
(continuous-time) barrier function, but it does so via exhaustive search with 
sound over-approximation. \cite{ChenVerificationAidedLearningNeural2024} 
consider ``vector barrier functions'', which are effectively affine 
combinations of ordinary barrier functions; 
\cite{ChenVerificationAidedLearningNeural2024} learns vector barrier functions 
by an iterative train-verify loop using NN verifiers for the usual barrier 
conditions. \cite{ZhaoFormalSynthesisNeural2023} considers polynomial dynamics 
and constraints, so the barrier properties are verified with an LMI.

By contrast, there is a wide literature on learning (Control) Barrier functions 
\cite{DawsonSafeControlLearned2022a, SoHowTrainYour2023}, but these works do 
not  formally verify their properties.  There is also a large literature on 
formal NN verification \cite{YangCorrectnessVerificationNeural2022, 
FerrariCompleteVerificationMultiNeuron2022, 
HenriksenDEEPSPLITEfficientSplitting2021,  
KhedrDeepBernNetsTamingComplexity2023, LiuAlgorithmsVerifyingDeep2021}, but 
none try to find the zero-level sets of NNs.

\section{Preliminaries} 
\label{sec:preliminaries}

\subsection{Notation} 
\label{sub:notation}
We will denote the real numbers by $\mathbb{R}$. For an $(n \times m)$ matrix 
(or vector), $A$, we will use the notation $\llbracket A \rrbracket_{[i,j]}$ to 
denote the element in the $i^\text{th}$ row and $j^\text{th}$ column of $A$.  
The notation $\llbracket A \rrbracket_{[i,:]}$ (resp. $\llbracket A 
\rrbracket_{[:, j]}$) will denote the $i^\text{th}$ row of $A$ (resp. 
$j^\text{th}$ column of $A$); when $A$ is a vector both notations will return a 
scalar.
$\lVert \cdot \rVert$ will refer to the max-norm on $\mathbb{R}^n$ unless 
noted, and  $\mathbftt{B}(x_0, \epsilon)$ as a ball of radius $\epsilon$ at 
$x_0$ (in $\lVert \cdot \rVert$ unless noted). For a set $S$, let $\overbar{S}$ 
denote its closure; let $\text{bd}(S)$ denote its boundary; let $\text{int}(S)$ 
denote its interior; and let $S^\text{C}$ denote its set complement. We will 
denote the cardinality of a finite set $S$ by $|S|$. For a function $f : 
\mathbb{R}^n \rightarrow  \mathbb{R}$, denote the \textbf{zero sub-level (resp. 
super-level)} set by  $\zsub{f} \triangleq \{ x | f(x) \leq 0 \}$ (resp. 
$\zsup{f} \triangleq \{ x | f(x) \geq 0 \}$); the \textbf{zero-crossing set} 
will be $\zeq{f}  \triangleq \{ x | f(x) = 0 \}$. Finally, let $f \circ g : x 
\mapsto f(g(x))$.


\subsection{Neural Networks} 
\label{sub:neural_networks}
We consider only  Rectified Linear Unit (ReLU) NNs. A $K$-layer ReLU NN is 
specified by $K$ \emph{layer} functions, which may be either linear or 
nonlinear. Both types are specified by parameters $\theta \triangleq (W,b)$ 
where $W$ is a  $(\overline{d} \times \underline{d})$ matrix and $b$ is a 
$(\overline{d} \times 1)$ vector. Then the \emph{linear} (resp.  
\emph{nonlinear}) layer given by $\theta$ is denoted $L_{\theta}$ (resp.  
$L_{\theta}^{\scriptscriptstyle \sharp}$), and is:
\begin{align}
	L_{\theta} &: \mathbb{R}^{\scriptscriptstyle \underline{d}} \rightarrow \mathbb{R}^{\scriptscriptstyle \overline{d}}, 
	   &L_{\theta} &:  z \mapsto Wz + b \\
	L_{\theta}^{\scriptscriptstyle \sharp} &: \mathbb{R}^{\scriptscriptstyle \underline{d}} \rightarrow \mathbb{R}^{\scriptscriptstyle \overline{d}},  
	    &L_{\theta}^{\scriptscriptstyle \sharp} &:  z \mapsto \max\{ L_{\theta}(z), 0 \}.
\end{align}
where $\max$ is element-wise. A $K$-layer ReLU NN is the functional composition 
of $K$ layer functions whose parameters $\theta^{|i}, i = 1, \dots, K$ satisfy 
$\underline{d}^{|i} = \overline{d}{\vphantom{d}}^{|i-1}: i = 2, \dots, K$; 
i.e., $\nn = L_{\theta^{|K}}^{\vphantom{{\scriptscriptstyle \sharp}}} \circ 
L_{\theta^{|K-1}}^{\scriptscriptstyle \sharp} \circ \dots \circ 
L_{\theta^{|1}}^{\scriptscriptstyle \sharp}$.

\begin{definition}[Shallow NN]
\label{def:shallow}
	A \textbf{shallow NN} has only two layers, with the second a linear layer: 
	i.e. $\nn_{s} = L_{\theta^{|2}}^{\vphantom{{\scriptscriptstyle \sharp}}} 
	\circ  L_{\theta^{|1}}^{\scriptscriptstyle \sharp}$.
\end{definition}

\begin{definition}[Local Linear Function]
\label{def:local_linear}
	Let $\nn: \mathbb{R}^n \rightarrow \mathbb{R}^m$ be a NN. Then an affine 
	function $\ell : \mathbb{R}^n \rightarrow \mathbb{R}^m$ is said to 
	be a \textbf{local linear (affine) function of} $\nn$ if $\exists$ $x_0\in  
	\mathbb{R}^n$ and $\epsilon > 0$ such that $\forall x \in  
	\mathbftt{B}(x_0,\epsilon) \; . \; \ell(x) = \nn(x)$.
\end{definition}


\subsection{Forward Invariance and Barrier Certificates}


\noindent The Theorem below describes sufficient conditions for a function such 
that it ensures a closed set is forward invariant.

\begin{theorem}[Barrier Function]
\label{thm:barrier_function}
	Consider a discrete-time dynamical system with dynamics $x_{t+1} = f(x_t)$, 
	where $x_t \in \mathbb{R}^n$. Suppose there is a $B: \mathbb{R}^n 
	\rightarrow \mathbb{R}$ and $\gamma \geq 0$ such that:
	\begin{align}
	\label{eq:deriv_condition}
		&B(f(x)) - \gamma B(x) \le 0, ~ \forall x \in \zsub{B}. 
	\end{align}
	Then $\zsub{B}$ is fwd. invariant and $B$ is a \textbf{barrier function}. 
\end{theorem}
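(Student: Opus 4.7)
The plan is to argue forward invariance of $\zsub{B}$ directly by induction on the discrete time step $t$, using the inequality \eqref{eq:deriv_condition} as the one-step advancement property. Once forward invariance of $\zsub{B}$ is established, the conclusion that $B$ is a barrier function follows immediately from the definition (forward invariance of $\zsub{B}$ under $f$ is precisely what qualifies $B$ as a barrier function here).

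Concretely, I would fix an arbitrary initial state $x_0 \in \zsub{B}$ and show by induction on $t \ge 0$ that $x_t \in \zsub{B}$, i.e., $B(x_t) \le 0$. The base case $t=0$ is immediate from the choice of $x_0$. For the inductive step, assume $x_t \in \zsub{B}$; then the hypothesis \eqref{eq:deriv_condition} applies at $x_t$ and yields
\begin{equation*}
B(x_{t+1}) \;=\; B(f(x_t)) \;\le\; \gamma\, B(x_t).
\end{equation*}
Now I would use the two sign facts available: $\gamma \ge 0$ by hypothesis, and $B(x_t) \le 0$ by the inductive assumption. Multiplying a nonpositive quantity by a nonnegative scalar gives $\gamma B(x_t) \le 0$, so $B(x_{t+1}) \le 0$, meaning $x_{t+1} \in \zsub{B}$. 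This closes the induction and establishes forward invariance.

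There is no real obstacle: the only subtlety worth flagging explicitly is the role of the sign constraint $\gamma \ge 0$, since without it the product $\gamma B(x_t)$ could be positive and break the inductive step; this is why the theorem statement requires $\gamma \ge 0$ rather than an arbitrary real scalar. I would conclude by observing that, since $x_0 \in \zsub{B}$ was arbitrary, every trajectory starting in $\zsub{B}$ remains in $\zsub{B}$, so $\zsub{B}$ is forward invariant under $f$ and $B$ therefore qualifies as a barrier function in the sense of the theorem.
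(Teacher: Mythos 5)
Your proof is correct. The paper actually states \cref{thm:barrier_function} without a proof, treating it as a standard sufficient condition for forward invariance, so there is no paper proof to compare against directly. That said, the single-step argument at the heart of your inductive step --- from $x_t \in \zsub{B}$, $\gamma \ge 0$, and $B(f(x_t)) \le \gamma B(x_t)$, deduce $B(x_{t+1}) \le 0$ --- is exactly the chain of inequalities the paper uses in the first half of its proof of \cref{prop:fi_prop}, which is the refinement of this theorem that the paper does prove. One minor observation: forward invariance of a set $S$ under $f$ is by definition the one-step property ($x \in S \Rightarrow f(x) \in S$), so the induction wrapper is not strictly needed; the inductive step alone suffices. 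Your flagging of why $\gamma \ge 0$ is essential (so that $\gamma B(x_t) \le 0$ follows from $B(x_t) \le 0$) is exactly the right subtlety to highlight.
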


\begin{remark}
	In practice, $B$ is chosen so that $\zsub{B}$ is strictly contained in some 
	problem-specific set of safe states, $X_s$
\end{remark}

\subsection{Hyperplanes and Hyperplane Arrangements} 
\label{sub:hyperplanes_and_hyperplane_arrangements}
Here  we review notation for hyperplanes and hyperplane arrangements.
\cite{EdelmanPartialOrderRegions1984} is the main reference for this section.
\begin{definition}[Hyperplanes and Half-spaces]
\label{def:hyperplane}
	Let $l : \mathbb{R}^n \rightarrow \mathbb{R}$ be an affine map. Then define:
	\begin{equation}
		H^{q}_{l} \triangleq 
		\begin{cases}
			\{x | l(x) < 0 \} & q = -1 \\
			\{x | l(x) > 0 \} & q = +1 \\
			\{x | l(x) = 0 \} & q = 0.
		\end{cases}
	\end{equation}
	We say $H^{0}_l$ is the \textbf{hyperplane defined by} $l$, and $H^{-1}_l$  
	(resp. $H^{+1}_l$) is the \textbf{negative (resp. pos.) half-space defined 
	by} $l$.
\end{definition}

\begin{definition}[Hyperplane Arrangement]
	Let $\mathcal{L}$ be a finite set of affine functions where each $l \in 
	\mathcal{L} : \mathbb{R}^n \rightarrow \mathbb{R}$. Then $\mathscr{H}  
	\triangleq \{ H^{0}_{l} | l \in \mathcal{L} \}$ is an \textbf{arrangement 
	of hyperplanes in dimension} $n$ . When $\mathscr{L}$ is important, we will 
	assume a fixed ordering for $\mathscr{L}$ via a bijection $\mathfrak{o} : 
	\mathscr{L} \rightarrow \{1, \dots, |\mathscr{L}|\}$, and also refer to 
	$(\mathscr{H},\mathscr{L})$ as a \textbf{hyperplane arrangement}.
\end{definition}

\begin{definition}[Region of a Hyperplane Arrangement]
\label{def:hyperplane_region}
	Let $(\mathcal{H},\mathscr{L})$ be an arrangement of $N$ hyperplanes in 
	dimension $n$. Then a non-empty subset $R \subseteq \mathbb{R}^n$ is said 
	to be a 
	\textbf{region of} $\mathcal{H}$ if there is an indexing function 
	$\mathfrak{s} : \mathcal{L} \rightarrow \{-1, 0, +1\}$ such that $R = 
	\bigcap_{l \in \mathcal{L}} H^{\mathfrak{s}(l)}_l$; $R$ is said to be 
	\textbf{full-dimensional} if it is non-empty and its indexing function 
	$\mathfrak{s}(l) \in \{-1, +1\}$ for all $l \in  \mathcal{L}$. Let 
	$\mathscr{R}$ be the set of all such regions of $(\mathscr{H},\mathscr{L})$.
\end{definition}

\begin{definition}[Face of a Region]
	\label{def:face}
	Let $\mathfrak{s}$ specify a full-dimensional region $R$ of a hyperplane 
	arrangement, $(\mathscr{H},\mathscr{L})$. A \textbf{face} $F$ of $R$ is a 
	non-empty region with indexing function $\mathfrak{s}^\prime$ s.t. 
	$\mathfrak{s}^\prime(\ell) = 0$ for all $\ell \in  \{\ell^\prime \in  
	\mathcal{L} | \mathfrak{s}^\prime(\ell) \neq \mathfrak{s}(\ell)\}$. $F$ is 
	\textbf{full-dimensional} if $\mathfrak{s}^\prime(\ell) = 0$ for exactly 
	one $\ell \in \mathcal{L}$.
\end{definition}

\begin{definition}[Flipped/Unflipped Hyperplanes of a Region]
\label{def:flips}
	Let $\mathfrak{s}$ specify a region $R$ of a hyperplane arrangement,  
	$(\mathscr{H},\mathscr{L})$. Then the \textbf{flipped hyperplanes of} $R$ 
	(resp. \textbf{unflipped}) are $\fliph{R} \triangleq \{ \ell \in 
	\mathscr{L} | \mathfrak{s}(\ell) > 0 \}$ (resp. $\unfliph{R} \triangleq \{  
	\ell \in \mathscr{L} | \mathfrak{s}(\ell) < 0 \}$). Further define 
	$\flipset{R} \triangleq \{ \mathfrak{o}(\ell)| \ell \in \fliph{R}\}$ and  
	$\unflipset{R} \triangleq \{ \mathfrak{o}(\ell)| \ell \in \unfliph{R}\}$.
\end{definition}

\begin{definition}[Base Region]
\label{def:base_region}
	Let $(\mathscr{H},\mathscr{L})$ be a hyperplane arrangement. A full 
	dimensional region  $R_b$ of $\mathscr{H}$ is the \textbf{base region of} 
	$\mathscr{H}$ if $| \unflipset{R_b} | = |\mathscr{L}| $ (and $\flipset{R_b} 
	= \emptyset$).
\end{definition}

\begin{proposition}
\label{prop:rebase}
	Let $(\mathscr{H},\mathscr{L})$ be a hyperplane arrangement. Then for any 
	region $R$ of $\mathscr{H}$, there are affine functions $\mathscr{L}_R$ 
	such that $(\mathscr{H},\mathscr{L}_R)$ is an arrangement with  base region 
	$R$.
\end{proposition}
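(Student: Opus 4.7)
The plan is to prove \cref{prop:rebase} by explicit construction: given a full-dimensional region $R$ of $(\mathscr{H},\mathscr{L})$ with indexing function $\mathfrak{s}$, I will obtain $\mathscr{L}_R$ by replacing each $\ell \in \fliph{R}$ with $-\ell$ and keeping every $\ell \in \unfliph{R}$ unchanged. Concretely, for each $\ell \in \mathscr{L}$ define
\[
	\ell_R \triangleq -\mathfrak{s}(\ell)\cdot\ell, \qquad \mathscr{L}_R \triangleq \{\ell_R : \ell \in \mathscr{L}\},
\]
and lift the ordering $\mathfrak{o}$ on $\mathscr{L}$ to $\mathscr{L}_R$ in the obvious way. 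Since $R$ is full-dimensional, $\mathfrak{s}(\ell) \in \{-1,+1\}$ for every $\ell$, so $\ell_R$ is well-defined as an affine function.

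Next, I would verify that $(\mathscr{H},\mathscr{L}_R)$ is in fact the same hyperplane arrangement with only a relabeling. This is immediate from \cref{def:hyperplane}: for any affine $\ell$, $H^{0}_{\ell} = H^{0}_{-\ell}$, because $\ell(x)=0$ iff $-\ell(x)=0$. Hence $\{H^{0}_{\ell_R} : \ell_R \in \mathscr{L}_R\} = \{H^{0}_{\ell} : \ell \in \mathscr{L}\} = \mathscr{H}$, so $(\mathscr{H},\mathscr{L}_R)$ is a legitimate hyperplane arrangement in the sense of the definitions in \cref{sub:hyperplanes_and_hyperplane_arrangements}.

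Finally, I would show that $R$ is the base region of $(\mathscr{H},\mathscr{L}_R)$ per \cref{def:base_region}. Pick any $x_0$ in the interior of $R$ and let $\mathfrak{s}'$ be the indexing function of $R$ relative to $\mathscr{L}_R$. For each $\ell \in \mathscr{L}$, the sign of $\ell_R(x_0) = -\mathfrak{s}(\ell)\ell(x_0)$ has the opposite sign of $\mathfrak{s}(\ell)\ell(x_0)$; but by definition of $\mathfrak{s}$, $\ell(x_0)$ has the same sign as $\mathfrak{s}(\ell)$, so $\mathfrak{s}(\ell)\ell(x_0) > 0$ and therefore $\ell_R(x_0) < 0$. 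Hence $\mathfrak{s}'(\ell_R) = -1$ for every $\ell_R \in \mathscr{L}_R$, which by \cref{def:flips} gives $\fliph{R} = \emptyset$ and $|\unflipset{R}| = |\mathscr{L}_R| = |\mathscr{L}|$, matching \cref{def:base_region} exactly.

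I do not anticipate any real obstacle: the statement is a bookkeeping fact about how hyperplanes are oriented via their defining affine maps, and the only subtlety is ensuring $R$ is full-dimensional so that the sign flips are unambiguous (i.e. no $\mathfrak{s}(\ell)=0$); this is implicit in asking $R$ to become a base region, which is by definition full-dimensional.
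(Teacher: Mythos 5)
The paper states \cref{prop:rebase} without an explicit proof, treating it as a bookkeeping fact from the hyperplane arrangement literature; your explicit construction (replacing $\ell$ by $-\mathfrak{s}(\ell)\ell$ so that every hyperplane becomes unflipped at an interior point of $R$, while $H^0_\ell = H^0_{-\ell}$ preserves $\mathscr{H}$) is exactly the intended direct argument and is correct. You also correctly identify the only subtlety, that the claim implicitly concerns full-dimensional regions since a base region is by definition full-dimensional, so $\mathfrak{s}(\ell)\in\{-1,+1\}$ and the sign flip is unambiguous.
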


\begin{proposition}
\label{prop:poset}
	Let $(\mathscr{H}, \mathscr{L})$ be a hyperplane arrangement with full  
	dimensional regions $\mathscr{R}$. Then the ordering $\leq$ on  
	$\mathscr{R}$:
	\begin{equation}
		R_1 \leq R_2 ~\text{iff}~ \flipset{R_1} \subseteq \flipset{R_2}
	\end{equation}
	makes $(\mathscr{R}, \leq)$ a poset, called the \textbf{region poset}.
\end{proposition}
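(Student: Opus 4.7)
My plan is to verify the three defining properties of a partial order in turn: reflexivity, transitivity, and antisymmetry. The first two are essentially inherited from the subset relation on $\{1, \dots, |\mathscr{L}|\}$, so the substance of the proof lies in antisymmetry, which is where I expect the only real work.

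\textbf{Reflexivity and transitivity.} For any full-dimensional region $R$, clearly $\flipset{R} \subseteq \flipset{R}$, hence $R \leq R$. For transitivity, if $R_1 \leq R_2$ and $R_2 \leq R_3$, then $\flipset{R_1} \subseteq \flipset{R_2} \subseteq \flipset{R_3}$, so $R_1 \leq R_3$. Both follow directly from the corresponding properties of $\subseteq$ on finite subsets of $\{1, \dots, |\mathscr{L}|\}$.

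\textbf{Antisymmetry.} Here I would argue as follows. Suppose $R_1 \leq R_2$ and $R_2 \leq R_1$; then $\flipset{R_1} = \flipset{R_2}$ as subsets of $\{1, \dots, |\mathscr{L}|\}$. Because $R_1$ and $R_2$ are both full-dimensional, their indexing functions $\mathfrak{s}_1, \mathfrak{s}_2 : \mathscr{L} \to \{-1,+1\}$ take values only in $\{-1,+1\}$ (per \cref{def:hyperplane_region}). Via the fixed ordering $\mathfrak{o}$, the set $\flipset{R_i}$ encodes exactly the indices $\mathfrak{o}(\ell)$ at which $\mathfrak{s}_i(\ell) = +1$; dually, $\unflipset{R_i}$ records where $\mathfrak{s}_i(\ell) = -1$, and these two sets partition $\{1, \dots, |\mathscr{L}|\}$. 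Consequently $\flipset{R_1} = \flipset{R_2}$ forces $\unflipset{R_1} = \unflipset{R_2}$, and hence $\mathfrak{s}_1 \equiv \mathfrak{s}_2$. But then
\begin{equation}
R_1 = \bigcap_{\ell \in \mathscr{L}} H^{\mathfrak{s}_1(\ell)}_\ell = \bigcap_{\ell \in \mathscr{L}} H^{\mathfrak{s}_2(\ell)}_\ell = R_2,
\end{equation}
which gives antisymmetry.

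\textbf{Main obstacle.} The only subtle point is ensuring that a full-dimensional region is \emph{uniquely} determined by its indexing function (and hence by $\flipset{R}$). This requires noting that \cref{def:hyperplane_region} attaches a specific indexing function $\mathfrak{s}$ to each region (not merely ``some'' $\mathfrak{s}$ consistent with $R$), and invoking full-dimensionality to rule out $\mathfrak{s}(\ell) = 0$, so that $\flipset{R}$ together with the total set $\{1,\dots,|\mathscr{L}|\}$ recovers $\mathfrak{s}$ completely. Once this bijection between full-dimensional regions and their flip-sets is in hand, antisymmetry is immediate, and all three poset axioms are verified.
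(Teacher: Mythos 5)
The paper does not prove \cref{prop:poset}; it is stated as a preliminary fact, and the section cites \cite{EdelmanPartialOrderRegions1984} as its source. Your direct verification of the poset axioms is correct, and it is the natural argument. Reflexivity and transitivity are, as you say, inherited verbatim from $\subseteq$. The one point you are right to flag as the real content is the injectivity of $R \mapsto \flipset{R}$ on full-dimensional regions: \cref{def:hyperplane_region} only asserts ``there \emph{is} an indexing function,'' so one must check that a full-dimensional region cannot carry two distinct indexing functions. This is immediate, since if $\mathfrak{s}_1,\mathfrak{s}_2$ both take values in $\{-1,+1\}$ and disagree at some $\ell_0$, the corresponding intersections lie in the disjoint open half-spaces $H^{+1}_{\ell_0}$ and $H^{-1}_{\ell_0}$ and so cannot both equal the same non-empty $R$. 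With that uniqueness in hand, your reconstruction of $\mathfrak{s}$ from $\flipset{R}$ (as complement of $\unflipset{R}$ in $\{1,\dots,|\mathscr{L}|\}$) and the conclusion $R_1 = R_2$ are airtight.
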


\begin{proposition}[{\cite[Proposition 1.1]{EdelmanPartialOrderRegions1984}}]
	Let $(\mathscr{H}, \mathscr{L})$ be a hyperplane arrangement. Then its 
	region poset $(\mathscr{R}, \leq)$ is a \textbf{ranked poset} with rank 
	function $\text{rk}(R) = |\flipset{R}|$.
\end{proposition}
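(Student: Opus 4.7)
The plan is to verify that $\text{rk}(R) = |\flipset{R}|$ satisfies the two defining properties of a ranked poset: (i) the minimum of $(\mathscr{R}, \leq)$ has rank zero, and (ii) every covering relation increases the rank by exactly one. Property (i) is immediate from \cref{def:base_region}: the base region $R_b$ has $\flipset{R_b} = \emptyset$, and since $\emptyset \subseteq \flipset{R}$ for every region $R$, $R_b$ is the unique minimum of $(\mathscr{R}, \leq)$.

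To establish (ii), I would first prove the following sub-claim: whenever $R_1 < R_2$ in $(\mathscr{R}, \leq)$, there exists a full-dimensional region $R_1'$ with $R_1' \leq R_2$ and $|\flipset{R_1'}| = |\flipset{R_1}| + 1$. Granting the sub-claim, if a covering relation $R_1 < R_2$ had $|\flipset{R_2}| - |\flipset{R_1}| \geq 2$, the sub-claim would produce an intermediate $R_1'$ with $R_1 < R_1' < R_2$, contradicting covering; conversely, if $|\flipset{R_2}| = |\flipset{R_1}| + 1$, no flip set strictly lies between $\flipset{R_1}$ and $\flipset{R_2}$, so no intermediate region exists and $R_1 < R_2$ is automatically a covering.

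The sub-claim is proved geometrically. Let $F \triangleq \fliph{R_2} \setminus \fliph{R_1} \subseteq \mathscr{L}$ collect the affine functions whose sign differs between $R_1$ and $R_2$, and define the convex open ``tube''
\begin{equation*}
C \triangleq \Bigl(\,\bigcap_{\ell \in \fliph{R_1}} H^{+1}_\ell\,\Bigr) \cap \Bigl(\,\bigcap_{\ell \in \unfliph{R_1} \setminus F} H^{-1}_\ell\,\Bigr),
\end{equation*}
which fixes the sign of every $\ell \notin F$ to match $R_1$. A direct sign check, using $\fliph{R_1} \subseteq \fliph{R_2}$ (from $R_1 \leq R_2$) and $F \subseteq \unfliph{R_1}$ (since each $\ell \in F$ is in $\fliph{R_2}$ but not $\fliph{R_1}$), gives $R_1, R_2 \subseteq C$. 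Since $C$ is convex, the line segment from any $x_1 \in R_1$ to $x_2 \in R_2$ stays in $C$; and since every $\ell \notin F$ satisfies $H^0_\ell \cap C = \emptyset$ by construction, this segment can only cross hyperplanes in $F$. For generic $x_1, x_2$ (avoiding the finite union of codimension-$\geq 2$ intersections), the segment crosses $F$-hyperplanes one at a time, and if $\ell_0 \in F$ is the first crossing, then the region entered immediately after is full-dimensional with flipped hyperplanes exactly $\fliph{R_1} \cup \{\ell_0\}$ --- this is the desired $R_1'$.

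The main obstacle will be this geometric step: a priori, a path from $R_1$ to $R_2$ could cross hyperplanes outside $F$, in which case the ``first crossing'' might not correspond to an added flip toward $R_2$. The convex tube $C$ is precisely designed to rule this out, since no non-$F$ hyperplane meets it. The remaining checks --- verifying $R_1, R_2 \subseteq C$ from the nesting of flip sets, and the standard genericity perturbation of $x_1, x_2$ --- are routine, as is the inductive recovery of saturated chains of length $|\flipset{R_2}| - |\flipset{R_1}|$ from repeated application of the sub-claim.
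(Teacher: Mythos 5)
The paper does not prove this proposition; it is cited verbatim from Edelman~\cite[Prop.\ 1.1]{EdelmanPartialOrderRegions1984}, so there is no in-paper proof to compare against. Your argument is correct and self-contained. Reducing ``ranked'' to the two checks --- the unique minimum $R_b$ has rank zero, and every covering relation increases rank by exactly one --- is the right framing, and the sub-claim closes the second check. The geometric step is sound: the tube $C$ is convex, contains both $R_1$ and $R_2$, and avoids $H^0_\ell$ for every $\ell \notin F$ (since for $\ell\in\fliph{R_1}$ we have $C\subseteq H^{+1}_\ell$, and for $\ell\in\unfliph{R_1}\setminus F$ we have $C\subseteq H^{-1}_\ell$), so a generic segment from $x_1\in R_1$ to $x_2\in R_2$ can only cross $F$-hyperplanes, one at a time. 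The region just past the first crossing $\ell_0\in F$ then has flip set $\flipset{R_1}\cup\{\mathfrak{o}(\ell_0)\}$, giving $R_1 < R_1' \leq R_2$ with rank incremented by one, which is exactly what the covering argument needs. One small remark on presentation: the sub-claim as stated asks only for $R_1'\leq R_2$ with $|\flipset{R_1'}|=|\flipset{R_1}|+1$; for the contradiction you also need $R_1 < R_1'$, which your construction does deliver (since $\flipset{R_1}\subsetneq\flipset{R_1'}$) but which would be worth making explicit in the sub-claim's statement. This ``move along a straight segment crossing one wall at a time'' argument is in fact the standard gallery-style proof of Edelman's result, so while you derived it independently, it is essentially the canonical approach rather than a novel route.
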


\begin{corollary}
\label{cor:region_successor}
	Let $(\mathscr{R},\leq)$ be the region poset of $(\mathscr{H},  
	\mathscr{L})$. If $R_2 \in \mathscr{R}$ covers $R_1 \in \mathscr{R}$, then 
	$\overbar{R}_1$ and $\overbar{R}_2$ are polytopes that share a 
	full-dimensional face (see \cref{def:face}).
\end{corollary}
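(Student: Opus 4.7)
My plan is to use the ranked poset structure to isolate the single hyperplane on which $R_1$ and $R_2$ differ in sign, then exhibit the shared face explicitly by setting the sign of that hyperplane to zero.

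Since the rank function of $(\mathscr{R},\le)$ is $\mathrm{rk}(R) = |\flipset{R}|$ and $R_2$ covers $R_1$, we have $|\flipset{R_2}| = |\flipset{R_1}| + 1$. Combined with $\flipset{R_1} \subseteq \flipset{R_2}$ (from $R_1 \le R_2$), this forces the existence of a unique $\ell_0 \in \mathscr{L}$ with $\mathfrak{o}(\ell_0) \in \flipset{R_2} \setminus \flipset{R_1}$. Writing $\mathfrak{s}_{R_i}$ for the indexing function of $R_i$, it follows that $\mathfrak{s}_{R_1}(\ell) = \mathfrak{s}_{R_2}(\ell)$ for every $\ell \neq \ell_0$, while $\mathfrak{s}_{R_1}(\ell_0) = -1$ and $\mathfrak{s}_{R_2}(\ell_0) = +1$.

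Next, let $\mathfrak{s}'$ agree with $\mathfrak{s}_{R_1}$ on $\mathscr{L} \setminus \{\ell_0\}$ and set $\mathfrak{s}'(\ell_0) = 0$, and define $F := \bigcap_{\ell \in \mathscr{L}} H^{\mathfrak{s}'(\ell)}_\ell$. Once $F$ is shown non-empty, \cref{def:face} makes it simultaneously a face of $R_1$ and a face of $R_2$: its indexing function differs from each $\mathfrak{s}_{R_i}$ only at $\ell_0$, where $\mathfrak{s}'(\ell_0) = 0$. Since exactly one coordinate of $\mathfrak{s}'$ is zero, $F$ is full-dimensional per \cref{def:face}. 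Furthermore $F \subseteq \overline{R}_1 \cap \overline{R}_2$ is immediate: every $x \in F$ satisfies the closed half-space inequalities $\mathfrak{s}_{R_i}(\ell)\,\ell(x) \ge 0$ for all $\ell \in \mathscr{L}$ and both $i \in \{1,2\}$, which are precisely the facet inequalities defining $\overline{R}_i$.

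The main obstacle is therefore showing $F \neq \emptyset$. For this I would pick interior points $x_1 \in R_1$ and $x_2 \in R_2$ and consider the segment $x_t := (1-t)x_1 + t x_2$. Since each $\ell$ is affine, $\ell(x_t)$ is affine in $t$. For $\ell \neq \ell_0$, the endpoint values $\ell(x_1)$ and $\ell(x_2)$ share the strict sign $\mathfrak{s}_{R_1}(\ell) \in \{-1,+1\}$, so by convex combination $\ell(x_t)$ retains that strict sign for every $t \in [0,1]$. For $\ell_0$, $\ell_0(x_1) < 0 < \ell_0(x_2)$ yields a unique $t^\star \in (0,1)$ with $\ell_0(x_{t^\star}) = 0$. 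The point $x_{t^\star}$ then satisfies every sign condition of $\mathfrak{s}'$, so $x_{t^\star} \in F$ and hence $F \ne \emptyset$. Boundedness of $\overline{R}_i$ (needed to call these ``polytopes'' rather than merely polyhedra) is implicit, e.g., inherited from a problem-specific bounding box on the state space.
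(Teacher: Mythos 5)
Your proof is correct and complete; the paper states this corollary without proof, so there is no official argument to compare against, but your construction is the natural one. The decomposition into (i) identifying the unique hyperplane $\ell_0$ where the two indexing functions differ via the rank function, (ii) defining $F$ by setting $\mathfrak{s}'(\ell_0)=0$, and (iii) establishing $F\neq\emptyset$ by the segment argument is exactly what is needed, and your handling of the sign conventions ($\mathfrak{s}_{R_1}(\ell_0)=-1$, $\mathfrak{s}_{R_2}(\ell_0)=+1$, forced because both regions are full-dimensional so the flip sets determine the indexing completely) is accurate. Your closing caveat about boundedness is also well placed: regions of a hyperplane arrangement are in general unbounded polyhedra, and the paper's use of the word ``polytope'' should be read as presupposing the problem's compact set of interest $X_s$ or an enclosing bounding box, as you observe.
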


\begin{corollary}
\label{cor:leveled_poset}
	The region poset $(\mathscr{R},\leq)$ can be partitioned into 
	\textbf{levels}, where level $k$ is $\mathscr{V}_k \triangleq \{R \in  
	\mathscr{R} : |\flipset{R}| = k\}$.
\end{corollary}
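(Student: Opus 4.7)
The plan is to observe that this corollary is essentially immediate from the preceding proposition, which has already done the substantive work of establishing that $(\mathscr{R}, \leq)$ is a ranked poset with rank function $\text{rk}(R) = |\flipset{R}|$. In a ranked poset, the sets of elements sharing a common rank value automatically form a partition, and the corollary merely unfolds this general fact with the specific rank function at hand.

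Concretely, I would first note that the quantity $|\flipset{R}|$ is well-defined for each $R \in \mathscr{R}$: by \cref{def:flips}, $\flipset{R} \subseteq \{1,\dots,|\mathscr{L}|\}$, so its cardinality is an integer in $\{0, 1, \dots, |\mathscr{L}|\}$. This supplies a map $R \mapsto |\flipset{R}|$ from $\mathscr{R}$ into a finite set of indices, and $\mathscr{V}_k$ is by definition the fiber of this map over $k$.

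The partition property then reduces to two routine observations. For \emph{covering}, every $R \in \mathscr{R}$ belongs to $\mathscr{V}_{|\flipset{R}|}$, so $\mathscr{R} \subseteq \bigcup_k \mathscr{V}_k$; the reverse inclusion is immediate from $\mathscr{V}_k \subseteq \mathscr{R}$. For \emph{disjointness}, if $R \in \mathscr{V}_j \cap \mathscr{V}_k$ then $j = |\flipset{R}| = k$, so distinct levels are disjoint. Restricting to indices $k \in \{0, 1, \dots, |\mathscr{L}|\}$ for which $\mathscr{V}_k \neq \emptyset$ yields the desired partition of $\mathscr{R}$.

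There is no real obstacle here: the only thing to verify beyond the definitions is that distinct rank values produce disjoint preimages, which is automatic for any function. All the non-trivial content, in particular that $|\flipset{\cdot}|$ behaves correctly with respect to the cover relation in $(\mathscr{R},\leq)$, is already packaged into the preceding proposition, which in turn appeals to \cite{EdelmanPartialOrderRegions1984}.
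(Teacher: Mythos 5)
Your proof is correct and takes the same (implicit) route as the paper, which states the corollary without a written proof precisely because it is the immediate observation that the fibers of the rank function $R \mapsto |\flipset{R}|$ partition $\mathscr{R}$. You correctly place all the substantive content in the preceding proposition and reduce the corollary to the generic fact that preimages of distinct values under any function are disjoint and cover the domain.
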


The following proposition connects local linear functions of a shallow NN to 
regions in a hyperplane arrangement.

\begin{proposition}
\label{prop:nn_local_linear} %
	Let $\nn$ be a shallow NN, and define its \textbf{activation boundaries} as:
	\begin{equation}
		\acthyper[\nn]{i} := x \mapsto \llbracket W^{|1} x + b^{|1} \rrbracket_{[i,:]}
	\end{equation}
	Now consider the hyperplane arrangement $(\mathscr{H}_{\nn}, 
	\mathscr{L}_{\nn})$, where  $\mathscr{L}_{\nn} \triangleq \{  
	\acthyper[\nn]{i} ~\big|~ i = 1, \dots , N \}$ and $\mathfrak{o}: 
	\acthyper[\nn]{i} \mapsto i$. (We will suppress the $\nn$ subscript when  
		there is no ambiguity.)

	Then let $R$ be any region (full-dimensional or not) of 
	$(\mathscr{H}_{\nn}, \mathscr{L}_{\nn})$ with indexing function 
	$\mathfrak{s}$. Then $\nn$ is an affine function on  $R$, and $\forall x 
	\in R ~.~ \nnbf(x) = \mathcal{T}^{\nn}_R(x)$ where
	\begin{equation}
	\label{eq:affine_region}
		{\mathcal{T}^{\nn}_R :
				x
				\mapsto ~
				W^{|2} \cdot 
				\left[
					\begin{smallmatrix}
					\tfrac{1}{2}(\mathfrak{s}(\acthyper{1}) + |\mathfrak{s}(\acthyper{1})|)\cdot\acthyper{1}\negthinspace(x) \vspace{-5pt} \\
					\vdots \\
					\tfrac{1}{2}(\mathfrak{s}(\acthyper{N}) + |\mathfrak{s}(\acthyper{N})|)\cdot\acthyper{N}\negthinspace(x)
					\end{smallmatrix} %
				\right] \negthickspace + b^{|2}.}
	\end{equation}
	That is, \eqref{eq:affine_region} nulls the neurons that are not active on 
	$R$.
\end{proposition}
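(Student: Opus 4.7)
The plan is to expand the shallow-NN definition and evaluate the ReLU layer pointwise over a fixed region. By \cref{def:shallow}, $\nn = L_{\theta^{|2}} \circ L_{\theta^{|1}}^{\sharp}$, so the $i$-th coordinate of $L_{\theta^{|1}}^{\sharp}(x)$ equals $\max\{\llbracket W^{|1}x + b^{|1}\rrbracket_{[i,:]},0\} = \max\{\acthyper{i}(x),0\}$ by definition of the activation boundaries. Composing with the linear second layer gives $\nn(x) = W^{|2}\,v(x) + b^{|2}$, where $v(x)$ denotes the column vector whose $i$-th entry is $\max\{\acthyper{i}(x),0\}$. The goal is therefore to show that, on any region $R$ with indexing function $\mathfrak{s}$, the vector $v(x)$ coincides with the bracketed vector appearing in \eqref{eq:affine_region}; the stated identity $\nn(x) = \mathcal{T}^{\nn}_R(x)$ then follows by substitution.

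To do this, I would fix a neuron index $i$ and perform a short case analysis on $\mathfrak{s}(\acthyper{i})$, using \cref{def:hyperplane} and \cref{def:hyperplane_region}. If $\mathfrak{s}(\acthyper{i}) = +1$ then $\acthyper{i}(x) > 0$ for every $x \in R$, so $\max\{\acthyper{i}(x),0\} = \acthyper{i}(x)$. If $\mathfrak{s}(\acthyper{i}) = -1$ then $\acthyper{i}(x) < 0$, so $\max\{\acthyper{i}(x),0\} = 0$. If $\mathfrak{s}(\acthyper{i}) = 0$, which is possible because the proposition allows non-full-dimensional $R$, then $\acthyper{i}(x) = 0$, so again $\max\{\acthyper{i}(x),0\} = 0$. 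In all three cases the $i$-th entry of $v(x)$ equals $\tfrac{1}{2}(\mathfrak{s}(\acthyper{i}) + |\mathfrak{s}(\acthyper{i})|)\cdot \acthyper{i}(x)$, because this scalar multiplier evaluates to $1$ precisely when $\mathfrak{s}(\acthyper{i}) = +1$ and to $0$ otherwise.

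Substituting this componentwise identity into $\nn(x) = W^{|2}v(x) + b^{|2}$ reproduces \eqref{eq:affine_region} verbatim and establishes $\nn(x) = \mathcal{T}^{\nn}_R(x)$ for all $x \in R$. There is no substantive obstacle in this argument; the only subtlety worth flagging is that the scalar gate $\tfrac{1}{2}(s + |s|)$ is a uniform algebraic encoding of the ReLU activation pattern that simultaneously handles full-dimensional regions and their lower-dimensional faces. The boundary case $\mathfrak{s}(\acthyper{i}) = 0$ is consistent precisely because $\acthyper{i}$ vanishes on that portion of $R$, so ``nulling'' the neuron agrees with its ReLU output there.
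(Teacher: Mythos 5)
Your proof is correct, and since the paper states this proposition without supplying a proof of its own, your argument is exactly the kind of direct verification the authors implicitly intend: unpack \cref{def:shallow} to get $\nn(x) = W^{|2} v(x) + b^{|2}$ with $v(x)_i = \max\{\acthyper{i}(x),0\}$, then a three-way case analysis on $\mathfrak{s}(\acthyper{i}) \in \{-1,0,+1\}$ (using \cref{def:hyperplane} and \cref{def:hyperplane_region}) shows the gate $\tfrac{1}{2}(s+|s|)$ reproduces the ReLU on each region, including lower-dimensional faces where $\mathfrak{s}(\acthyper{i}) = 0$ forces $\acthyper{i}(x)=0$. One minor point you correctly handled implicitly: the statement's $\nnbf(x) = \mathcal{T}^{\nn}_R(x)$ is evidently a typo for $\nn(x) = \mathcal{T}^{\nn}_R(x)$, since the proposition is about a generic shallow NN $\nn$.
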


\begin{remark}
\label{rem:shallow_vs_general_linear_functions}
	General ReLU NNs do not have hyperplane activation boundaries. Hence, 
	identifying their local linear functions is harder than for shallow NNs, 
	where hyperplane region enumeration  suffices.
\end{remark}




\section{Problem Formulation}
\label{sec:problem}

We now state the main problem of this paper: using a  candidate barrier 
function, $\nnbf$, we are interested in identifying a subset of a set of safe 
states, $X_s$, that is forward invariant for a given dynamical system. Thus, we 
certify $\nnbf$ as a BF on a subset of $X_s$.

\begin{problem}
\label{prob:main_problem}
	Let $x_{t+1} = \nnvec(x_t)$ be an autonomous, discrete-time dynamical 
	system where $\nnvec : \mathbb{R}^n \rightarrow \mathbb{R}^n$ is a ReLU  
	NN, and let $X_s \subset \mathbb{R}^n$ be a compact, polytopic set of safe  
	states. Also, let $\nnbf : \mathbb{R}^n \rightarrow \mathbb{R}$ be a 
	shallow ReLU NN (e.g. trained as a barrier function for $\nnvec$).

	Then the problem is to find a closed set $X_c \subseteq 
	X_s$ and $\gamma > 0$ s.t.:
	\begin{enumerate}[left=0.1\parindent,label={\itshape (\roman*)}]
		\item $\nnbf(\nnvec(x)) - \gamma \nnbf(x) \leq 0$ for all $x \in  
			X_c$;

		\item $X_c \subseteq \zsub{\nnbf}$;

		\item $\text{bd}(X_c) \subseteq \zeq{\nnbf}$; and

		\item $\nnbf(x) > 0$ for all $x \in \{ \nnvec(x^\prime) : x^\prime 
			\in X_c \} \backslash X_c$.

	\end{enumerate}
\end{problem}

Together, \emph{(i)}-\emph{(iii)} in \cref{prob:main_problem} match naturally 
with condition \eqref{eq:deriv_condition} of \cref{thm:barrier_function}. 
Indeed, in the special case where $X_c = \zsub{\nnbf} \subseteq  X_s$, 
conditions \emph{(i)}-\emph{(iv)} imply that \cref{thm:barrier_function} 
directly implies that  $X_c$ is forward invariant. Condition  \emph{(iv)} is 
redundant for  this case.

However, we are interested in a $\nnbf$ that is learned from data, so we can 
not assume that $\zsub{\nnbf} \subseteq X_s$. This presents an issue  
because of our discrete-time formulation: unlike in continuous-time, discrete 
time trajectories may ``jump'' from one connected component of $\zsub{\nnbf}$ 
to another. Thus, it is not enough to find a union of connected components of 
$\zsub{\nnbf}$ that are contained entirely in $X_s$, as is implied by 
conditions \emph{(ii)}-\emph{(iii)}. We must additionally ensure that no 
trajectories emanating from such a set can be ``kicked'' to another connected 
component of  $\zsub{\nnbf}$ by the dynamics $\nnvec$: hence, the need for 
condition \emph{(iv)}.

Thus, we have the following proposition, which formally justifies the 
conditions of \cref{prob:main_problem} with respect to our goal of obtaining a 
forward invariant subset of $X_s$.
\begin{proposition}
\label{prop:fi_prop}
	Let $\nnvec$, $\nnbf$ and $X_s$ be as in \cref{prob:main_problem}. Suppose 
	that there exists a closed set  $X_c \subseteq X_s$ and constant $\gamma 
	\geq 0$ such that conditions \emph{(i)}-\emph{(iv)} of 
	\cref{prob:main_problem} hold for $X_c$. Then the set $X_c$ is forward 
	invariant under $\nnvec$.
\end{proposition}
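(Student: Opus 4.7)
The plan is to establish forward invariance directly from the four conditions of \cref{prob:main_problem}, by showing that a single step of the dynamics maps any $x \in X_c$ back into $X_c$; induction on the time index then yields that entire trajectories starting in $X_c$ remain in $X_c$. I will not invoke \cref{thm:barrier_function} as a black box, since its hypothesis \eqref{eq:deriv_condition} is required to hold on all of $\zsub{\nnbf}$, whereas condition \emph{(i)} only asserts this on $X_c$.

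First, I would fix an arbitrary $x \in X_c$ and show that $\nnvec(x) \in \zsub{\nnbf}$. Condition \emph{(ii)} gives $\nnbf(x) \leq 0$, and since $\gamma \geq 0$, condition \emph{(i)} yields $\nnbf(\nnvec(x)) \leq \gamma\, \nnbf(x) \leq 0$. This places the successor state in the sub-level set of $\nnbf$, but that alone is not enough, because $\zsub{\nnbf}$ may consist of several connected components in the discrete-time setting, only some of which lie in $X_c$.

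Second, I would use condition \emph{(iv)} to upgrade this to $\nnvec(x) \in X_c$ by contradiction. Suppose $\nnvec(x) \notin X_c$. Since $x \in X_c$, we have $\nnvec(x) \in \{\nnvec(x^\prime) : x^\prime \in X_c\} \setminus X_c$, so condition \emph{(iv)} forces $\nnbf(\nnvec(x)) > 0$, contradicting the conclusion of the first step. Hence $\nnvec(x) \in X_c$, so $\nnvec(X_c) \subseteq X_c$, and a straightforward induction on $t$ gives $x_t \in X_c$ for all $t \geq 0$ whenever $x_0 \in X_c$, which is precisely forward invariance.

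The main conceptual step is the use of condition \emph{(iv)} to rule out the discrete-time pathology flagged in the paragraph preceding the proposition, namely that a one-step map could ``kick'' a state from one connected component of $\zsub{\nnbf}$ contained in $X_c$ into another component contained in $\zsub{\nnbf}$ but not in $X_c$; conditions \emph{(ii)} and \emph{(iii)} alone do not rule this out. Note that condition \emph{(iii)} itself is not actually invoked in the forward invariance argument --- it plays a structural role in characterizing $X_c$ as (the closure of) a union of connected components of $\zsub{\nnbf}$, but the dynamical conclusion of \cref{prop:fi_prop} follows from \emph{(i)}, \emph{(ii)}, and \emph{(iv)} alone.
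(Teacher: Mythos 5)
Your proof is correct and follows essentially the same argument as the paper: fix $x\in X_c$, deduce $\nnbf(\nnvec(x))\leq 0$ from conditions \emph{(i)}, \emph{(ii)} and $\gamma\geq 0$, then rule out $\nnvec(x)\in\zsub{\nnbf}\setminus X_c$ by contradiction with condition \emph{(iv)}. Your added observation that condition \emph{(iii)} is not actually invoked in the dynamical argument is accurate and consistent with the paper's proof, which likewise never appeals to it.
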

\begin{proof}
	Let $x_0 \in X_c$ be chosen arbitrarily. It suffices to show that 
	the point $x_1 = \nnvec(x_0) \in X_c$ as well.

	By assumption, $\nnbf(x_0) \leq 0$, and $\nnbf(x_1) - \gamma \nnbf(x_0)  
	\leq 0$. Thus, we conclude directly that $\nnbf(x_1) \leq 0$, and $x_1  \in 
	\zsub{\nnbf}$.

	Now we show that $x_1$ cannot belong to $\zsub{\nnbf} \backslash  
	X_c$. Suppose by contradiction that it does; then it follows from 
	condition \emph{(iv)} of \cref{prob:main_problem} that $\nnbf(x_1) > 0$, 
	which contradicts the above. Hence, $x_1 \in X_c$ necessarily, 
	and because we chose $x_0$ arbitrarily, we have shown that $X_c$ 
	is forward invariant.
\end{proof}
\begin{remark}
	It is trivial to construct examples of dynamics and BFs that satisfy all of 
	the conditions of \cref{thm:barrier_function}, but do not satisfy 
	\emph{(iv)} of \cref{prob:main_problem} for some choices of  
	$X_s$.
\end{remark}

The main difficulty in solving \cref{prob:main_problem} lies in the tension 
between condition \emph{(i)} on the one hand and conditions 
\emph{(ii)}-\emph{(iv)} on the other. Thus, we propose an algorithm that 
proceeds in a \emph{sequential} way: first attempting to identify where 
condition \emph{(i)} necessarily holds, and then within that set, where 
conditions \emph{(ii)}-\emph{(iv)} necessarily hold. These sub-algorithms are 
described by the following two sub-problems, \emph{both of which check 
\textbf{simpler} (sufficient) conditions for} \cref{prob:main_problem}. Note: 
only the first involves the system dynamics, $\nnvec$; the second is a property 
exclusively of $\nnbf$.

\begin{subproblem}
\label{prob:neg_deriv_prob}
	Let $\nnvec$, $\nnbf$ and $X_s$ be as in \cref{prob:main_problem}. Then the 
	problem is to identify a set  $X_\partial \subseteq X_s$ and a $\gamma \geq 
	0$ such that 
	\begin{equation}
		\label{eq:neg_deriv_condition}
		X_\partial \subseteq \zsub{\nnbf\circ\nnvec - \gamma \nnbf}.
	\end{equation}
\end{subproblem}

\begin{subproblem}
\label{prob:zero_set_prob}
	Let $\nnvec$, $\nnbf$ and $X_s$ be as in 
	\cref{prob:main_problem}, and let $X_\partial \subseteq  
	X_s$ be a solution for \cref{prob:neg_deriv_prob}. Then the 
	problem is to find $X_c \subset X_\partial$ such that:
	\begin{enumerate}[left=0.1\parindent,label={\itshape (\alph*)}]
		\item $X_c$ is a closed, connected component of $\zsub{\nnbf}$ with 
			$X_c = \overbar{\text{int}(X_c)}$ and $\text{bd}(X_c) \subseteq  
			\zeq{\nnbf}$; and

		\item $\nnbf(x) > 0$ for all $x \in C_{x_0} \backslash 
			X_c$ where $x_0 \in X_c$ and
	\end{enumerate}
\begin{equation}\label{eq:main_reach_bound}
	C_{x_0} \negthickspace\negthinspace \triangleq \negthinspace
	\mathbftt{B}\big(x_0,\negthinspace
	( \lVert \negthinspace \nnvec \rVert {\scriptstyle +}  1 \negthinspace) \cdot \negthickspace \sup_{x  \in X_c} \negthinspace 
	\lVert x {\scriptstyle -} x_0 \negthinspace \rVert {\scriptstyle +} \negthickspace \sup_{x  \in X_c} \negthinspace \lVert 
	\negthinspace \nnvec( x_0 \negthinspace ) {\scriptstyle -} x \rVert 
	\big).
\end{equation}
where $\lVert \nnvec \rVert$ is a bound on the Lipschitz constant of $\nnvec$.
\end{subproblem}

\cref{prob:neg_deriv_prob} is a more or less direct translation of condition 
\emph{(i)} in \cref{prob:main_problem}. However, a  solution to 
\cref{prob:zero_set_prob} implies conditions \emph{(ii)}-\emph{(iv)} in a less 
obvious way. In particular, condition \emph{(iv)} of \cref{prob:main_problem} 
is implied by condition \emph{(b)} of \cref{prob:zero_set_prob} by computing 
straightforward bounds on the reachable set $\nnvec(X_c)$.  Moreover, 
conditions \emph{(ii)}-\emph{(iii)} in \cref{prob:main_problem} are implied by 
condition \emph{(a)} in \cref{prob:zero_set_prob}, but the latter is easier to 
compute via hyperplane-arrangement algorithms, especially because of the 
insistence on a connected interior. The insistence on interior-connectedness is 
not particularly restrictive, since a solution to \cref{prob:zero_set_prob} can 
be applied multiple times to find distinct connected components. For ease of 
presentation, we defer these details to 
\cref{sec:hyperplane_region_enumeration}.

\begin{remark}
\label{rem:prob_differences}
	Choice of $\gamma$ aside, condition \eqref{eq:neg_deriv_condition} of 
	\cref{prob:neg_deriv_prob} is similar to 
	\cref{prob:zero_set_prob}\emph{(a)}. However, they differ in two other 
	important respects. First, unlike $\nnbf$, the function $\nnbf\circ\nnvec - 
	\gamma \nnbf$ is not a shallow network in our formulation. This means that 
	we cannot use the fast algorithm developed in 
	\cref{sec:hyperplane_region_enumeration} to solve 
	\cref{prob:neg_deriv_prob}. Second, in \cref{prob:zero_set_prob}, it is 
	important to find a set  $X_c$ that ``touches'' $\zeq{\nnbf}$;  this is 
	because of \cref{thm:barrier_function}. However, this is not necessary in 
	\cref{prob:neg_deriv_prob}, whose solution, $X_\partial$, can be relaxed 
	into the interior of $\zsub{\nnbf\circ\nnvec - \gamma \nnbf}$ as needed.
\end{remark}

\noindent \cref{sec:forward_reachability} presents our solution to 
\cref{prob:neg_deriv_prob}. \cref{sec:hyperplane_region_enumeration} presents 
our solution to \cref{prob:zero_set_prob}, and together these solve 
\cref{prob:main_problem}.

\ActivateWarningFilters[pdftoc]
\section{Forward Reachability of a NN to solve \cref{prob:neg_deriv_prob}} 
\label{sec:forward_reachability}
\DeactivateWarningFilters[pdftoc]

Solving \cref{prob:neg_deriv_prob} entails simultaneously resolving two 
intertwined challenges:
\begin{enumerate}[left=0.1\parindent,label={\itshape (\Alph*)}]
	\item identifying a single, valid $\gamma > 0$; and

	\item (under)approximating $\zsub{\nnbf\circ\nnvec - \gamma \nnbf}$ with 
		a set $X_\partial$.
\end{enumerate}
However, the fact that \emph{(B)} requires only an under-approximation of 
$\zsub{\nnbf\circ\nnvec - \gamma \nnbf}$ means that we can choose the members  
$x \in X_\partial$ based on sufficient conditions for $(\nnbf\circ\nnvec)(x) - 
\gamma \nnbf(x) \leq 0$ to hold. Indeed, given a test set $X_t$, the following 
Proposition provides a sufficient condition that $X_t \subseteq 
\zsub{\nnbf\circ\nnvec - \gamma \nnbf}$ for \emph{some} $\gamma >  0$; this 
condition is in turn based on lower and upper bounds of the functions $\nnbf 
\circ \nnvec$ and $\nnbf$.

\begin{proposition}\label{prop:partition_sufficient}
	Let $\nnbf$ and $\nnvec$ be as in \cref{prob:neg_deriv_prob}. Now let $X_t 
	\subseteq \mathbb{R}^n$, and suppose that for all $x \in X_t$, $l_f \leq  
	\nnbf(\nnvec(x)) \leq u_f$ and $l_\text{BF} \leq \nnbf(x) \leq u_\text{BF}$.

	Then $X_t \subseteq \zsub{\nnbf\circ\nnvec - \gamma \nnbf}$ if any of the 
	following hold (interpret division by zero as $\infty$):
	\begin{align}
		&u_f \negthinspace \leq \negthinspace 0 \thickspace \wedge \thickspace  l_\text{BF} \negthinspace \leq \negthinspace 0 \thickspace \wedge \thickspace 0 \negthinspace \leq \negthinspace \gamma \negthinspace \leq \negthinspace \frac{u_f}{l_\text{BF}} \label{eq:bound_1} \\
		&u_f \negthinspace \leq \negthinspace 0 \thickspace \wedge \thickspace  l_\text{BF} \negthinspace > \negthinspace 0 \thickspace \wedge \thickspace \gamma \negthinspace \geq \negthinspace 0 \label{eq:bound_2} \\
		&u_f \negthinspace \geq \negthinspace 0 \thickspace \wedge \thickspace  l_\text{BF} \negthinspace > \negthinspace 0 \thickspace \wedge \thickspace \gamma \negthinspace \geq \negthinspace \frac{u_f}{l_\text{BF}} \label{eq:bound_3}
	\end{align}
\end{proposition}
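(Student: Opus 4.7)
The plan is a direct bound-substitution argument: upper bound the quantity $\nnbf(\nnvec(x)) - \gamma \nnbf(x)$ uniformly over $x \in X_t$ using the four hypothesized bounds, and then verify that each of the three listed regimes forces this upper bound to be non-positive, which is precisely the definition of $X_t \subseteq \zsub{\nnbf\circ\nnvec - \gamma\nnbf}$. Since every listed case forces $\gamma \geq 0$, the bound $\nnbf(x) \geq l_\text{BF}$ gives $-\gamma\nnbf(x) \leq -\gamma l_\text{BF}$, and combining with $\nnbf(\nnvec(x)) \leq u_f$ yields a single master inequality
$$\nnbf(\nnvec(x)) - \gamma \nnbf(x) \;\leq\; u_f - \gamma\, l_\text{BF},$$
so it suffices to check $u_f - \gamma l_\text{BF} \leq 0$ in each case. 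Observe that the other two bounds $l_f$ and $u_\text{BF}$ play no role in this sufficient-condition argument; they are included in the hypothesis merely because a bounding oracle like CROWN naturally returns them as a pair.

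The case analysis is then sign-based, and the three cases exhaust the sign patterns of $(u_f, l_\text{BF})$ under which the desired bound can be certified with no information about $\nnbf$ or $\nnvec$ other than these two numbers. Case \eqref{eq:bound_2} ($u_f \leq 0$, $l_\text{BF} > 0$) is immediate: both $u_f$ and $-\gamma l_\text{BF}$ are non-positive for any $\gamma \geq 0$, so the master bound is non-positive with no further constraint. Case \eqref{eq:bound_3} ($u_f \geq 0$, $l_\text{BF} > 0$) is the ``divide by a positive'' case: rearranging $u_f - \gamma l_\text{BF} \leq 0$ gives $\gamma \geq u_f / l_\text{BF}$, matching the hypothesis exactly. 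Case \eqref{eq:bound_1} ($u_f \leq 0$, $l_\text{BF} \leq 0$) is the ``divide by a non-positive'' case: the target $\gamma l_\text{BF} \geq u_f$, after dividing by the non-positive $l_\text{BF}$ (when nonzero), flips to $\gamma \leq u_f / l_\text{BF}$; the upper bound $u_f/l_\text{BF}$ is non-negative because numerator and denominator share sign, so the interval $[0, u_f/l_\text{BF}]$ in the hypothesis is non-empty and consistent with $\gamma \geq 0$.

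The only mildly subtle point is the degenerate sub-case of Case \eqref{eq:bound_1} with $l_\text{BF} = 0$. Under the stated convention that division by zero yields $\infty$, the hypothesis $0 \leq \gamma \leq u_f/l_\text{BF}$ reduces to ``any $\gamma \geq 0$'', and the master inequality collapses to $u_f - \gamma \cdot 0 = u_f \leq 0$, which is exactly the standing assumption. I expect this to be a very short proof; there is no genuine technical obstacle — only bookkeeping to ensure the sign of $l_\text{BF}$ in each branch and the division-by-zero convention are handled consistently, and to confirm that $\gamma \geq 0$ is compatible with each interval produced by the case analysis.
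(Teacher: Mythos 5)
Your proof is correct and follows essentially the same route as the paper's: both reduce to checking $u_f - \gamma\, l_\text{BF} \leq 0$ (your ``master inequality'' is the paper's inline chain $\nncomp(x) \leq u_f \leq \gamma l_\text{BF} \leq \gamma \nnbf(x)$ read off-the-shelf), followed by the same sign-based case analysis on $l_\text{BF}$, including the $l_\text{BF}=0$ degenerate sub-case. Your observation that $l_f$ and $u_\text{BF}$ are not used is also accurate; the paper simply does not remark on it.
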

\begin{proof}
Consider condition \eqref{eq:bound_1}, and recall that $l_\text{BF} \leq 0$. 
Then for $x \in X_t$ and $l_\text{BF} < 0$ we have that:
\begin{equation}
	0 \leq \gamma \leq \frac{u_f}{l_\text{BF}} \implies \nncomp(x) \leq u_f = \frac{u_f}{l_\text{BF}} \cdot l_\text{BF} \leq \gamma \cdot l_\text{BF} \leq \gamma \nnbf(x). \notag
\end{equation}
When \eqref{eq:bound_1} holds with $l_\text{BF} = 0$ any $\gamma \geq 1$ will  
suffice, so choose $\gamma = 0$. The other conditions follow by similar 
arguments, noting $l_\text{BF} \geq 0$ in those cases (and the  special cases 
of $l_\text{BF} = 0$ in \eqref{eq:bound_2}).
\end{proof}

Note that a given choice of test set $X_t$ may fail to satisfy one of 
\eqref{eq:bound_1} - \eqref{eq:bound_3} for at least two reasons. The obvious 
reason is that there may not exist a $\gamma > 0$ that places the entirety of 
$X_t$ inside $\zsub{\nnbf\circ\nnvec - \gamma \nnbf}$. However, it may be the 
case that indeed $X_t \subseteq \zsub{\nnbf\circ\nnvec - \gamma \nnbf}$ for 
some $\gamma \geq 0$, but the bounds $u_f$ and $l_\text{BF}$ are too loose for 
the sufficient conditions in  \cref{prop:partition_sufficient} to be satisfied. 
Both possibilities suggest a strategy of recursively partitioning a test set 
$X_t$ until subsets are obtained that satisfy \cref{prop:partition_sufficient}. 
This allows finer identification of points that actually belong to 
$\zsub{\nnbf\circ\nnvec  - \gamma \nnbf}$, including by tightening the bounds 
$u_f$ and $l_\text{BF}$ (conveniently, NN forward reachability generally 
produces tighter results over smaller input sets; see 
\cref{sub:forward_reachability_and_linear_bounds_for_nns}).

However, such a partitioning scheme comes at the expense of introducing a 
number of distinct sets, each of which may satisfy the conditions of 
\cref{prop:partition_sufficient} for \emph{mutually incompatible bounds on} 
$\gamma$. For example, two such sets may satisfy \eqref{eq:bound_1} and 
\eqref{eq:bound_3} with non-overlapping conditions on $\gamma$. Fortunately, 
\eqref{eq:bound_2} and \eqref{eq:bound_3} share the common condition that 
$\nnbf(x) \geq 0$, which makes them essentially irrelevant for solving 
\cref{prob:zero_set_prob}; recall that \cref{prob:zero_set_prob} is interested 
primarily in subsets of $\zsub{\nnbf}$. Thus, we propose a partitioning scheme 
which partitions any set that fails \eqref{eq:bound_1} - \eqref{eq:bound_3}, 
but we include in $X_\partial$ only those sets that satisfy \eqref{eq:bound_1}. 
Given this choice, the minimum $\gamma$ among those sets satisfying 
\eqref{eq:bound_1} suffices as a choice of $\gamma$ for all of them.

We summarize this approach in \cref{alg:set_partition}, which contains a 
function \texttt{getFnLowerBd} for computing NN bounds (see 
\cref{sub:forward_reachability_and_linear_bounds_for_nns}). 
\cref{alg:set_partition} considers  only test sets that are hyperrectangles, in 
deference to the input requirements for \texttt{getFnBd}. Its correctness 
follows from the proposition below.

\begin{proposition}
Let $X_s$ be as in \cref{prob:neg_deriv_prob}, but suppose it is a 
hyperrectangle without loss of generality. Consider \cref{alg:set_partition}, 
and let $\mathcal{X} = \mathsf{getNegDSet}(X_s, \nnbf, \nnvec, \epsilon)$ with 
$X_\partial = \cup_{B \in \mathcal{X}} B$.

Then a nonempty $X_\partial$ so defined solves \cref{prob:neg_deriv_prob}.
\end{proposition}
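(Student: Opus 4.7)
The plan is to verify the two conditions of \cref{prob:neg_deriv_prob}: that $X_\partial \subseteq X_s$, and that there exists a single $\gamma \geq 0$ for which $X_\partial \subseteq \zsub{\nnbf\circ\nnvec - \gamma \nnbf}$.

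The containment $X_\partial \subseteq X_s$ follows immediately from the structure of $\mathsf{getNegDSet}$: the algorithm initializes its workset with $X_s$ and only ever refines hyperrectangles into subpartitions of themselves, so every $B \in \mathcal{X}$ satisfies $B \subseteq X_s$, and hence so does their union.

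For the derivative condition, the key observation is that, by design, $\mathsf{getNegDSet}$ retains in $\mathcal{X}$ only those hyperrectangles $B$ whose bounds $l_\text{BF}^B, u_\text{BF}^B$ on $\nnbf$ and $l_f^B, u_f^B$ on $\nnbf\circ\nnvec$ -- as returned by the sound routine \texttt{getFnBd} -- satisfy \eqref{eq:bound_1}. In particular, each such $B$ admits the nontrivial range of valid barrier-decrease parameters $[0, u_f^B/l_\text{BF}^B]$, under the convention that the ratio is $+\infty$ when $l_\text{BF}^B = 0$. I would then define the common parameter
\begin{equation*}
    \gamma^\star \triangleq \min_{B \in \mathcal{X}} \frac{u_f^B}{l_\text{BF}^B},
\end{equation*}
which is well-defined and nonnegative because $\mathcal{X}$ is finite (the recursion terminates at $\epsilon$-granularity) and every ratio is a nonpositive quantity divided by a nonpositive quantity. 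By construction, $\gamma^\star$ lies in the admissible interval of each $B \in \mathcal{X}$, so \cref{prop:partition_sufficient} yields $B \subseteq \zsub{\nnbf\circ\nnvec - \gamma^\star \nnbf}$ for every such $B$; unioning over $B$ then gives $X_\partial \subseteq \zsub{\nnbf\circ\nnvec - \gamma^\star \nnbf}$, as required.

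The hard part will really be bookkeeping rather than anything conceptually deep: carefully handling the degenerate case $l_\text{BF}^B = 0$ (where the ratio must be read as $+\infty$, so such $B$'s impose no upper bound on $\gamma^\star$), and explicitly invoking the soundness of \texttt{getFnBd} so that $u_f^B$ and $l_\text{BF}^B$ are genuine upper and lower bounds of $\nnbf\circ\nnvec$ and $\nnbf$ on $B$ -- this is precisely what licenses the appeal to \cref{prop:partition_sufficient} on each subregion and ultimately on the union.
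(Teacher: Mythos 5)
Your proof is correct and follows essentially the same route as the paper's: observe that \cref{alg:set_partition} retains exactly those boxes satisfying condition~\eqref{eq:bound_1}, and then take the minimum over the (finitely many) per-box admissible $\gamma$ values to obtain a single common $\gamma$ for all of $X_\partial$. Your version is a bit more explicit about why the minimum works (each box's admissible set is an interval of the form $[0,\,u_f^B/l_\text{BF}^B]$, so the intersection of these intervals is nonempty) and about the $l_\text{BF}^B=0$ degenerate case and the containment $X_\partial\subseteq X_s$, but it is not a different argument.
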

\begin{proof}
	According to the construction of \cref{alg:set_partition}, a hyperrectangle 
	appears in $X_\partial$ if and only if it satisfies \eqref{eq:bound_1} of 
	\cref{prop:partition_sufficient}.

	Thus, it suffices to show that there exists a single $\gamma \geq 0$ such 
	that $X_\partial \subseteq \zsub{\nnbf\circ\nnvec - \gamma \nnbf}$. This 
	follows because $X_\partial$ is the union of finitely many hyperrectangles 
	$B \in \mathcal{X}$, each of which satisfies \eqref{eq:bound_1} for some 
	$\gamma_B > 0$. Thus $\gamma = \min_{B \in  \mathcal{X}} \gamma_B$ works 
	for $X_\partial$.
\end{proof}

\setlength{\textfloatsep}{0pt}
\IncMargin{0.5em}
%
%
\begin{figure}
 \vspace{4pt}
	\begin{minipage}[t]{\linewidth}
\begingroup %
\removeonelatexerror
\begin{algorithm}[H]
	\SetKw{Break}{break}
	\SetKw{NOT}{not}
	\SetKw{foriter}{for}
	\SetKw{OR}{or} 
	\SetKw{AND}{and} 
	\SetKw{IN}{in}
	\SetKw{IS}{is}
	\SetKw{CONT}{continue}
	\SetKw{GLOBAL}{global}

	\SetKwData{false}{False}
	\SetKwData{true}{True}
	\SetKwData{feas}{Feasible}

	\SetKwData{tab}{T}
	\SetKwData{relaxed}{relaxedConst}
	\SetKwData{succList}{successorList}
	\SetKwFunction{verSet}{VeifySet}
	\SetKwFunction{intpt}{FindInteriorPoint}
	\SetKwFunction{enum}{EnumerateRegions}
	\SetKwFunction{succ}{FindSuccessors}
	\SetKwFunction{intpt}{FindInteriorPoint}
	\SetKwFunction{len}{Length}
	\SetKwFunction{push}{push}
	\SetKwFunction{insertt}{insert}

	\SetKwFunction{negderiv}{getNegDSet}
	\SetKwFunction{ext}{getExtents}
	\SetKwFunction{apd}{append}
	\SetKwFunction{flower}{getFnBd}
	\SetKwFunction{fupper}{getFnBd}
	\SetKwFunction{partt}{part}
	\SetKwFunction{join}{listJoin}

	\SetKwData{bds}{bds}
	\SetKwData{ret}{ret}

	\SetKwInOut{Input}{Input}
	\SetKwInOut{Output}{Output}
	\Input{
		$X_t$, test set (assume hyperrectangle); \\
		$\nnbf : \mathbb{R}^n \rightarrow \mathbb{R}$, candidate barrier function; \\
		$\nnvec: \mathbb{R}^n \rightarrow \mathbb{R}^n$, NN vector field; \\
		$\epsilon > 0$, minimum partition size parameter.
	}

	\Output{
		$\mathcal{X}$, a list of hyperrectangles such that \\
		\hspace{12pt} $X_\partial \triangleq \cup_{B \in \mathcal{X}} B \subseteq \zsub{\nnbf\circ\nnvec - \gamma \nnbf} \cap X_t$
	}

	\SetKwProg{Fn}{function}{}{end}%

	\BlankLine

	\Fn{\negderiv{$X_t$, $\nnbf$, $\nnvec$, $\epsilon$}}{

		$l_\text{BF} \gets$ $\llbracket$\flower{$\nnbf$, $X_t$}$\rrbracket_{[1,1]}$ \tcp{lower bound}

		$u_f \gets$ $\llbracket$\fupper{$\nnbf \circ \nnvec$, $X_t$}$\rrbracket_{[1,2]}$ \tcp{upper bound}

		\If{$l_\text{BF} \leq 0$ \AND $u_f \leq 0$}{

			\Return [ $X_t$ ]

		}

		\bds $\gets$ \ext{$X_t$}





		\uIf{$l_\text{BF} \leq 0$ \AND $u_f > 0$ \AND $\max_{i = 1, \dots, n} |\llbracket \mathsf{bds} \rrbracket_{[i,2]} - \llbracket \mathsf{bds} \rrbracket_{[i,1]}| > \epsilon$}{

			\tcc{Partition $X_t$ in $2^n$ hyperrectangles and recurse:}

			\Return \join{ \negderiv{\partt{$X_t$, 1}, $\nnbf$, $\nnvec$, $\epsilon$}, $\dots$, \negderiv{\partt{$X_t$, $2^n$}, $\nnbf$, $\nnvec$, $\epsilon$} }

		}\Else{

			\Return [~] \tcp{$X_t$ too small or irrelevant; don't recurse}

		}

	}

	\BlankLine

	\BlankLine

	\tcc{Helper function to obtain a bounding box for a set}

	\Input{
		$X \subset \mathbb{R}^n$,
	}

	\Output{
		$E$, an $(n \times 2)$ matrix specifying the extent of $X$
	}
	
	\Fn{\ext{$X$}}{

		$E \gets \left[\begin{smallmatrix}0 & 0 & \dots 0\\ 0 & 0 & \dots 0 \end{smallmatrix}\right]^{\text{T}}$

		\For {$i = 1 \dots n$}{

			$\llbracket E \rrbracket_{[i,:]}$ =
				$[\min_{x \in X} \llbracket x \rrbracket_{[i,:]}, \max_{x \in X} \llbracket x \rrbracket_{[i,:]}]$

		}

		\Return $E$

	}
	
		   
		
	   
	\BlankLine

\caption{\small Recursive identification of $X_\partial$ for \cref{prob:neg_deriv_prob}}
\label{alg:set_partition}
\end{algorithm} 
\endgroup
	\end{minipage}
\end{figure}
\DecMargin{0.5em}


\subsection{Forward Reachability and Linear Bounds for NNs} 
\label{sub:forward_reachability_and_linear_bounds_for_nns}



To complete a solution to \cref{prob:neg_deriv_prob}, it remains to define the 
functions \texttt{getFnBd} in \cref{alg:set_partition}. For these, we use 
{CROWN} \cite{ZhangEfficientNeuralNetwork2018}, which efficiently computes 
linear bounds for the neural network's outputs using linear relaxations.

\begin{definition}[Linear Relaxation]
\label{def:relaxation}

Let \( f: \mathbb{R}^n \to \mathbb{R}^m \) and  \( X = \{x \subset \mathbb{R}^n 
| \underline{x} \leq x \leq \overline{x}\} \) be a hyper-rectangle. The linear 
approximation bounds of $f$ are $\overline{A} \ x + \overline{b}$ and 
$\underline{A} \ x + \underline{b}$ with 
$\overline{A}_{(f,X)},\underline{A}_{(f,X)} \in \mathbb{R}^{m \times n}$ and 
$\overline{b}_{(f,X)},\underline{b}_{(f,X)} \in \mathbb{R}^m$ such that 
$\underline{A}_{[i,:]} \ x + \underline{b}_{[i,:]} \leq f_i(x) \leq 
\overline{A}_{[i,:]} \ x + \overline{b}_{[i,:]}, \forall x \in X$, for each $i 
\in \{1, \dots, m\}$. 

For each output dimension, the upper and lower bounds of the function can be 
determined by solving the optimization problems: 
\begin{align}
\label{eq:relaxation}
\overline{f}_i = \max\limits_{x \in X} \overline{A}_i \ x + \overline{b}_i ,\quad \underline{f}_i = \min\limits_{x \in X} \underline{A}_i \ x + \underline{b}_i
\end{align}
\end{definition}

Computing upper and lower bounds of a NN using linear relaxations provided by 
CROWN is summarized in \cref{alg:reachable_set}, which formally defines the 
function \texttt{getFnBd} as used in \cref{alg:set_partition}.

\setlength{\textfloatsep}{0pt}
\IncMargin{0.5em}
%
%
\begin{figure}
 \vspace{4pt}
	\begin{minipage}[t]{\linewidth}
\begingroup %
\removeonelatexerror
\begin{algorithm}[H]
	\SetKw{Break}{break}
	\SetKw{NOT}{not}
	\SetKw{foriter}{for}
	\SetKw{OR}{or} 
	\SetKw{IN}{in}
	\SetKw{CONT}{continue}
	\SetKw{GLOBAL}{global}

	\SetKwInOut{Input}{Input}
	\SetKwInOut{Output}{Output}
		\SetKwProg{Fn}{function}{}{end}%
	\SetKwFunction{getfnbd}{getFnBd}
	\Input{
		$X \subset \mathbb{R}^n$, input set; \\
		$\nn: \mathbb{R}^n \rightarrow \mathbb{R}^m$, NN function to upper/lower bound
	}

	\Output{
		$E$, an $(m \times 2)$ matrix of lower/upper bounds for $\nn$ over $X$
	}


	\Fn{\getfnbd{$\nn$,$X$}}{

		$E \gets \left[\begin{smallmatrix}0 & 0 & \dots 0\\ 0 & 0 & \dots 0 \end{smallmatrix}\right]^{\text{T}}$

		\tcc{Compute linear relaxation of $\nn$ over $X$ using CROWN}
		$[\underline{A}, \overline{A},\underline{b}, \overline{b}] \leftarrow  \text{CROWN}(\nn, X) $
		
		\BlankLine

		\For {$i = 1 \dots m$}{

			$\llbracket E \rrbracket_{[i,:]}$ =
				$[\min_{x \in X} \llbracket \underline{A}x + \underline{b} \rrbracket_{[i,:]}, \max_{x \in X} \llbracket \overline{A}x + \overline{b} \rrbracket_{[i,:]}]$

		}

		\Return $E$
		

		
		
		
		
		


		
			
	
	}

 
\caption{NN Bound Computation using CROWN}
\label{alg:reachable_set}
\end{algorithm} 
\endgroup
	\end{minipage}
\end{figure}
\DecMargin{0.5em}


\ActivateWarningFilters[pdftoc]
\section{Efficient Hyperplane Region Enumeration to solve \cref{prob:zero_set_prob}} 
\label{sec:hyperplane_region_enumeration}
\DeactivateWarningFilters[pdftoc]


Solving \cref{prob:zero_set_prob} entails verifying two distinct properties of 
a set $X_c \subset X_\partial$. However, those properties implicate a common 
core algorithm: verifying a \emph{pointwise} property for a subset of $\nnbf$'s 
zero sub-level (or super-level) set that has a connected interior. Property 
\emph{(a)} concerns $X_c$ as a subset of $\nnbf$'s zero sub-level set; and 
property \emph{(b)} concerns the complement of $X_c$ as a subset of $\nnbf$'s  
zero super-level set. 
Crucially, it is possible to check both \emph{(a)} and \emph{(b)} 
\emph{pointwise} over their respective sub- and super-level sets, i.e. by 
exhaustively searching for a contradiction. For \emph{(a)}, this contradiction 
is a point in the interior of $X_c$ that is also not in the interior of $X_s$; 
and for  \emph{(b)}, this contradiction is a point $x^\prime \in C_{x_0} 
\backslash X_c$ for which $\nnbf(x^\prime) \leq 0$.

Thus, our algorithmic solution for \cref{prob:zero_set_prob} has two 
components: a zero sub(super-)level set identification algorithm; and the 
pointwise checks for properties \emph{(a)} and \emph{(b)}. The  zero  sub-level 
set algorithm, in \cref{sub:zero_sub_level_sets}, is the main contribution of 
this section. The pointwise checks for \emph{(a)} and \emph{(b)} are described 
in \cref{sub:checking_property_a} \& \cref{sub:checking_property_b}, 
respectively.

\subsection{Zero Sub-Level Sets by Hyperplane Region Enumeration} 
\label{sub:zero_sub_level_sets}

In order to identify the zero sub(super-)level sets of $\nnbf$, we leverage  
our assumption that $\nnbf$ is a shallow NN. In particular, a shallow NN has 
the following convenient characterization of its zero sub(super-)level sets in 
terms of regions of a hyperplane arrangement, which follows as a corollary of 
\cref{prop:nn_local_linear}.
\begin{corollary}
\label{cor:hyperplane_zero_sets}
	Let $\nn$ be a shallow NN. Then we have:
	\begin{align}
		\zeq{\nn} &= \bigcup_{R \in \mathscr{R}} R \cap H^{0}_{\mathcal{T}^{\nn}_R}; \notag \\
		\zsub{\nn} &= \zeq{\nn} \cup \bigcup_{R \in \mathscr{R}} R \cap H^{-1}_{\mathcal{T}^{\nn}_R};
		\text{~and} \notag \\
		\zsup{\nn} &= \zeq{\nn} \cup  \bigcup_{R \in \mathscr{R}} R \cap H^{+1}_{\mathcal{T}^{\nn}_R}
		\label{eq:zero_level_regions}
	\end{align}
	where $\mathscr{R}$ is the set of regions of $(\mathscr{H}_{\nn}, 
	\mathscr{L}_{\nn})$ as defined in \cref{prop:nn_local_linear}, and  
	$\mathcal{T}^{\nn}_R$ is as in \cref{prop:nn_local_linear}.
\end{corollary}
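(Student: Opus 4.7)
The plan is to derive the corollary as an almost immediate consequence of \cref{prop:nn_local_linear}, by combining it with the fact that the regions of the activation hyperplane arrangement of $\nn$ tile all of $\mathbb{R}^n$. First I would observe that every point $x \in \mathbb{R}^n$ has a well-defined sign pattern with respect to $\mathscr{L}_{\nn}$, namely $\mathfrak{s}_x : \acthyper{i} \mapsto \mathrm{sgn}(\acthyper{i}(x)) \in \{-1,0,+1\}$, and hence $x$ lies in the (unique) region of $(\mathscr{H}_{\nn},\mathscr{L}_{\nn})$ whose indexing function is $\mathfrak{s}_x$. Since \cref{def:hyperplane_region} admits indexing functions that take the value $0$, this region is guaranteed to be a member of $\mathscr{R}$ (possibly not full-dimensional). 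Therefore $\mathbb{R}^n = \bigcup_{R \in \mathscr{R}} R$.

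Next I would fix an arbitrary region $R \in \mathscr{R}$ and apply \cref{prop:nn_local_linear}, which gives that $\nn$ coincides on $R$ with the affine function $\mathcal{T}^{\nn}_R$. Consequently, for every $x \in R$, $\nn(x) \bowtie 0$ if and only if $\mathcal{T}^{\nn}_R(x) \bowtie 0$ for each of $\bowtie \in \{=,<,>\}$. Translating these via \cref{def:hyperplane} yields
\begin{align*}
R \cap \zeq{\nn}  &= R \cap H^{0}_{\mathcal{T}^{\nn}_R}, \\
R \cap \{x : \nn(x) < 0\} &= R \cap H^{-1}_{\mathcal{T}^{\nn}_R}, \\
R \cap \{x : \nn(x) > 0\} &= R \cap H^{+1}_{\mathcal{T}^{\nn}_R}.
\end{align*}

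To conclude, I would take the union of each of these identities over $R \in \mathscr{R}$. Using $\mathbb{R}^n = \bigcup_{R} R$ and distributivity of intersection over union, the left-hand sides become $\zeq{\nn}$, $\{x : \nn(x) < 0\}$ and $\{x : \nn(x) > 0\}$ respectively. Writing $\zsub{\nn} = \zeq{\nn} \cup \{x : \nn(x) < 0\}$ and $\zsup{\nn} = \zeq{\nn} \cup \{x : \nn(x) > 0\}$ then delivers the three displayed equalities of the corollary.

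I do not anticipate any substantial obstacle; the only subtle point is ensuring that non-full-dimensional regions are included in $\mathscr{R}$, so that the tiling $\mathbb{R}^n = \bigcup_{R} R$ actually holds and points on the activation boundaries are accounted for. Once that is in place, \cref{prop:nn_local_linear} does essentially all of the work, and the result reduces to bookkeeping over the partition.
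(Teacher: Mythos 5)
Your proof is correct and takes the same (standard) route the paper intends: the paper offers no explicit argument for this corollary, treating it as immediate from \cref{prop:nn_local_linear}, and you have supplied exactly the bookkeeping one would fill in — tiling $\mathbb{R}^n$ by the (possibly lower-dimensional) regions of $(\mathscr{H}_{\nn},\mathscr{L}_{\nn})$, applying \cref{prop:nn_local_linear} on each region, and taking unions. Your flagged subtlety about including non-full-dimensional regions in $\mathscr{R}$ is the right one to notice, and it is handled correctly since \cref{def:hyperplane_region} permits indexing functions taking the value $0$.
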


\noindent \cref{cor:hyperplane_zero_sets} directly implies that fast 
hyperplane-region enumeration algorithms can be used to identify the zero 
sub(super-)level set  of a shallow $\nnbf$. Indeed, one could identify the full 
zero  sub(super-)level set by enumerating all of the full-dimensional regions 
of $(\mathscr{H}_{\nn}, \mathscr{L}_{\nn})$, and testing the conditions of 
\eqref{eq:zero_level_regions} for each one.

However, for \cref{prob:zero_set_prob}, we are only interested in a 
\emph{connected component} of the zero sub(super-)level set. Thus, we structure 
our algorithm around \emph{incremental} region enumeration algorithms 
\cite{FerlezFastBATLLNN2022}, which have two important benefits for this 
purpose. First, they identify hyperplane regions in a connected fashion, which 
is ideal to identify connected components. Second, they identify valid regions 
incrementally, unlike other methods that must completely enumerate all regions 
before yielding even one valid  region\footnote{The known big-O-optimal 
algorithm is of this variety. 
\cite{EdelsbrunnerConstructingArrangementsLines1986}}.



\subsubsection{Incremental Hyperplane Region Enumeration} 
\label{ssub:iterative_hyperplane_region_enumeration}

These algorithms have the following basic structure: given a list of valid 
regions of the arrangement,  $\mathscr{V}$, identify all of their adjacent 
regions --- i.e. those \emph{connected} via a full-dimensional face with some 
region $R \in \mathscr{V}$ --- and then repeat the process on those adjacent 
regions that are unique and previously un-visited. This process continues until 
there are no un-visited regions left. Thus, incremental enumeration algorithms 
have two components, given a valid region $R$:
\begin{enumerate}[left=0.1\parindent,label={\itshape (\Roman*)}]
	\item identify the regions $\mathscr{A}_R = \{R^\prime \in \mathscr{R} | 
		R^\prime$ $\text{and}$ $R$ $\text{share}$ $\text{a}$ $\text{full-dim.}$ $\text{face}\}$; and

	\item keep track of which of $\mathscr{A}_R$ haven't been previously 
		visited (and are unique, when considering multiple regions at once).
\end{enumerate}

Step \emph{(II)} is the least onerous: one solution is to use a hash 
table\footnote{See e.g. \cite{FerlezFastBATLLNN2022}. But there are other  
methods, such as reverse search, which uses geometry to track whether a region 
has been/will be visited \cite{AvisReverseSearchEnumeration1996}.} that hashes 
each region $R$ according to its flips set $\flipset{R}$; recall that  
$\flipset{R}$ is a list of integers that uniquely identifies the region $R$ 
(see \cref{def:flips}). By contrast, step \emph{(I)} is computationally 
significant: it involves identifying which  hyperplanes contribute  
full-dimensional faces of the region (see \cref{def:face}). \footnote{This is 
also equivalent to computing a minimum Hyperplane Representation (HRep) for 
each region in the arrangement, since each region is an intersection of $N$ 
half-spaces and so is a convex polytope (see \cref{def:hyperplane_region}).  
Thus, the full-dimensional faces also correspond to hyperplanes that cannot be 
relaxed without changing the region: i.e. these hyperplanes can be identified 
by relaxing exactly one at a time, and testing whether the result admits a 
feasible point outside of the original region.} 

In particular, the full-dimensional faces of a (full-dimensional) region can be 
identified by testing the condition specified in \cref{def:face}. This test  
can be made on a hyperplane using a single Linear Program (LP) by 
introducing a slack variable as follows. 
\begin{proposition}
	\label{prop:face_slack}
	Let $R$ be a full-dimensional region of $(\mathscr{H}, \mathscr{L})$ with 
	indexing function $\mathfrak{s}$. Then $\ell^\prime \in \mathcal{L}$ 
	corresponds to a full-dimensional face of $R$ iff the following LP has a  
	solution with non-zero cost.
	\begin{align}
		\max_{x, x_s} x_s 
		\text{ s.t. } &\wedge_{\ell \neq \ell^\prime} \left( \mathfrak{s}(\ell)\cdot \ell(x) + x_s \leq 0 \right) \notag \\
		&\wedge (\ell^\prime(x) = 0) \wedge (x_s \geq 0)
	\end{align}
\end{proposition}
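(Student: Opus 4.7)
The plan is to prove the iff directly by using the slack variable $x_s$ as a quantitative witness that a candidate face point lies strictly inside the region $R$ with respect to every hyperplane other than $\ell'$. Since $R$ is full-dimensional, its indexing function $\mathfrak{s}$ takes values in $\{-1,+1\}$, so the only candidate face associated with $\ell'$ is the set $F$ specified by the indexing function $\mathfrak{s}'$ with $\mathfrak{s}'(\ell') = 0$ and $\mathfrak{s}'(\ell) = \mathfrak{s}(\ell)$ for $\ell \neq \ell'$. By \cref{def:face}, this $\mathfrak{s}'$ would make $F$ a full-dimensional face of $R$ exactly when $F$ is non-empty, since $\mathfrak{s}'$ vanishes on precisely one element of $\mathcal{L}$. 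Thus the claim reduces to showing that $F$ is non-empty iff the LP has strictly positive optimal value.

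For the forward direction, suppose $F$ is a full-dimensional face of $R$ and pick any $x^\ast \in F$. Then $\ell'(x^\ast) = 0$, and for every $\ell \neq \ell'$ the point $x^\ast$ lies in the open half-space $H^{\mathfrak{s}(\ell)}_{\ell}$, which by \cref{def:hyperplane} is a strict (signed) inequality on $\ell(x^\ast)$. Hence the quantity $\tau \triangleq \min_{\ell \neq \ell'} \bigl(-\mathfrak{s}(\ell)\cdot \ell(x^\ast)\bigr)$ is strictly positive (this is the minimum amount by which each LP constraint is \emph{unsaturated} at $x^\ast$ when $x_s = 0$). Setting $x_s \gets \tau$ then makes $(x^\ast, \tau)$ LP-feasible with objective $\tau > 0$, so the LP's optimum is at least $\tau > 0$.

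For the backward direction, suppose the LP attains a strictly positive optimum, so there is a feasible pair $(x^\ast, x_s^\ast)$ with $x_s^\ast > 0$. The equality constraint gives $\ell'(x^\ast) = 0$, and for each $\ell \neq \ell'$ the constraint $\mathfrak{s}(\ell)\cdot\ell(x^\ast) + x_s^\ast \leq 0$ with $x_s^\ast > 0$ yields the strict signed inequality $\mathfrak{s}(\ell)\cdot\ell(x^\ast) < 0$, which places $x^\ast$ in the open half-space prescribed by $\mathfrak{s}(\ell)$. Consequently $x^\ast$ lies in the set indexed by the function $\mathfrak{s}'$ defined above, witnessing its non-emptiness. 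By \cref{def:face}, this non-empty region is a face of $R$, and since $\mathfrak{s}'$ has exactly one zero entry, it is full-dimensional.

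The only substantive subtlety — and the main thing to be careful about — is the precise translation between the open-half-space membership used in \cref{def:hyperplane,def:hyperplane_region} and the weak $\leq$ inequalities of the LP. The slack variable $x_s$ is exactly what bridges these two formulations: $x_s > 0$ converts a weak LP inequality into a strict half-space containment in one direction, and strict half-space containment at an interior face point yields a positive minimum slack in the other. Once this correspondence is made explicit, both implications are immediate; there is no further geometric content to verify, because the definition of a full-dimensional face already does the combinatorial bookkeeping via $\mathfrak{s}'$.
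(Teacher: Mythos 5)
Your overall approach — reduce the claim to non-emptiness of the candidate face indexed by $\mathfrak{s}'$ (zero on $\ell'$, equal to $\mathfrak{s}$ elsewhere), then use the slack variable as the bridge between strict half-space membership and the LP's weak inequalities — is exactly the technique the paper uses in its proof of the closely related \cref{prop:zsub_adjacency}. The paper does not actually give a proof of \cref{prop:face_slack} itself, so your argument supplies the same idea that the paper applies to the neighboring result.

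There is, however, a concrete sign slip in the very bookkeeping you single out as ``the only substantive subtlety.'' By \cref{def:hyperplane,def:hyperplane_region}, a point $x \in R$ satisfies $\ell(x) > 0$ when $\mathfrak{s}(\ell) = +1$ and $\ell(x) < 0$ when $\mathfrak{s}(\ell) = -1$, so $\mathfrak{s}(\ell)\cdot\ell(x) > 0$ in both cases. The LP constraint $\mathfrak{s}(\ell)\cdot\ell(x) + x_s \leq 0$, as written, therefore forces $x$ into the half-spaces \emph{opposite} those defining $R$; to match the stated semantics it should read $-\mathfrak{s}(\ell)\cdot\ell(x) + x_s \leq 0$, equivalently $\mathfrak{s}(\ell)\cdot\ell(x) \geq x_s$. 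Consequently your $\tau = \min_{\ell\neq\ell'}\bigl(-\mathfrak{s}(\ell)\cdot\ell(x^\ast)\bigr)$ is strictly \emph{negative} at any $x^\ast \in F$, not positive as you assert; the correct slack is $\min_{\ell\neq\ell'}\mathfrak{s}(\ell)\cdot\ell(x^\ast)$. The same issue breaks the backward-direction inference: $\mathfrak{s}(\ell)\cdot\ell(x^\ast) < 0$ places $x^\ast$ in the half-space \emph{opposite} $H^{\mathfrak{s}(\ell)}_\ell$, not in it. This LP sign typo is present in the paper's statement (here and in \cref{prop:zsub_adjacency}, whose proof already implicitly uses the corrected sign by treating the slack of each constraint at a point of $R$ as positive), so you have inherited it — but a proof whose stated purpose is to ``make this correspondence explicit'' needs to catch and correct the inconsistency rather than propagate it. Once the sign of the LP constraints (or, equivalently, of your $\tau$ formula and the backward-direction signed inequality) is corrected, your argument is complete and matches the paper's approach.
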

\noindent A naive approach performs this test for each of the hyperplanes for 
each region, which requires exactly $N$ LPs per region. However, 
\cref{cor:leveled_poset} suggests a more efficient approach. That is, start 
with the base region, $\mathscr{V}_0 = \{R_b\}$, and proceed \emph{level-wise} 
(see \cref{cor:leveled_poset}): at each level, $\mathscr{V}_k$, all members of 
$R^\prime \in \mathscr{V}_{k+1}$ will share a full-dimensional face among the 
hyperplanes in $\unflipset{R}$ for some $R  \in \mathscr{V}_k$; i.e., each  of 
the regions in $\mathscr{V}_{k+1}$ is obtained by ``flipping'' one of the 
unflipped hyperplanes of a region in $\mathscr{V}_k$. The correctness of this 
procedure follows from \cref{cor:region_successor}, and is summarized in 
\cref{alg:level_wise_region_enumeration}\footnote{The  
$\text{\rm\textsf{addConstr}}$ input is provided for future use.}. It is main 
algorithm we will modify to identifying zero sub(super-)level sets in the 
sequel.

\setlength{\textfloatsep}{0pt}
\IncMargin{0.5em}
%
%
\begin{figure}
 \vspace{4pt}
	\begin{minipage}[t]{\linewidth}
\begingroup %
\removeonelatexerror
\begin{algorithm}[H]
	\SetKw{Break}{break}
	\SetKw{NOT}{not}
	\SetKw{foriter}{for}
	\SetKw{OR}{or} 
	\SetKw{IN}{in}
	\SetKw{IS}{is}
	\SetKw{CONT}{continue}
	\SetKw{GLOBAL}{global}

	\SetKwData{false}{False}
	\SetKwData{true}{True}
	\SetKwData{feas}{Feasible}

	\SetKwData{tab}{T}
	\SetKwData{tabl}{table}
	\SetKwData{relaxed}{relaxedConst}
	\SetKwData{succList}{successorList}
	\SetKwData{sel}{sel}
	\SetKwData{addconstr}{addConstr}
	\SetKwData{adjtest}{testHypers}

	\SetKwFunction{enum}{EnumerateRegions}
	\SetKwFunction{succ}{FindSuccessors}
	\SetKwFunction{solvelp}{SolveLP}
	\SetKwFunction{optcost}{cost}
	\SetKwFunction{len}{Length}
	\SetKwFunction{push}{push}
	\SetKwFunction{insertt}{insert}
	\SetKwFunction{appendd}{append}

	\SetKwInOut{Input}{Input}
	\SetKwInOut{Output}{Output}
	\Input{
		$\mathscr{L}$, set of affine functions for arrangement $(\mathscr{H},\mathscr{L})$; and \\
		$\mathfrak{s}_0$, indexing function for a valid region $R_0 \in \mathscr{R}$. \\
	}

	\Output{
		\tab, hash table of indexing functions for all \\
		$\qquad$full-dimension regions of the arrangement.
	}

	\BlankLine

	\GLOBAL{\tab $\gets \{\}$}

	\BlankLine

	\SetKwProg{Fn}{function}{}{end}%
	\Fn{\enum{$\mathscr{L}$, $\mathfrak{s}_0$}}{

		\tcc{Assume $R_0$ (given by $\mathfrak{s}_0$) is the base region WOLG; see \cref{prop:rebase}}

		\tab $\gets \{\mathfrak{s}_0\}$ \tcp{Init. region hash table}

		$\mathscr{V} \gets [\mathfrak{s}_0]$ \tcp{Init. current level list}

		\BlankLine

		\While{\len{$\mathscr{V}$}$> 0$}{

			$\mathscr{V}^\prime \gets \{\}$

			\For{$\mathfrak{s} \in \mathscr{V}$}{

				$\mathscr{V}^\prime$.\appendd{\succ{$\mathscr{L}$, $\mathfrak{s}$}}







			}

			$\mathscr{V} \gets \mathscr{V}^\prime$

		}
		
		\Return \tab

	}

	\BlankLine

	\Input{
		$\mathscr{L}$, affine functions for hyperplane arrangement; \\
		$\mathfrak{s}$, indexing function for a valid region; \\
		\adjtest, a list of affine functions to test for \\
		$\qquad$adjacency (default value$\thinspace=\unflipset{R_\mathfrak{s}}$); \\
		$\text{\rm \textsf{addConstr}}$, a list of extra affine constraints \\
		$\qquad$(default value $\thinspace=\negthickspace\{\}$)
	}

	\Output{
		\succList, A list of region indexing functions \\
		$\qquad$adjacent to $\mathfrak{s}$ in the next higher region poset level
	}

	\Fn{\succ{$\mathscr{L}$, $\mathfrak{s}$, \adjtest$=\unflipset{R_\mathfrak{s}}$, $\text{\rm \textsf{addConstr}} = \{\}$}}{

		\succList $\gets \{\}$

		\tcc{Flip hyperplanes to get constraints for region $R_\mathfrak{s}$ given by $\mathfrak{s}$:}

		$A \gets \left[
			\begin{smallmatrix}
				\mathfrak{s}(\mathfrak{o}^{-1}(1)) \cdot \mathfrak{o}^{-1}(1)(x) & \dots & \mathfrak{s}(\mathfrak{o}^{-1}(N)) \cdot \mathfrak{o}^{-1}(N)(x)
			\end{smallmatrix}
		\right]^\text{T}$

		\BlankLine

		$\mathsf{sel} \gets [1 \dots 1] $ \tcp{Constraint selector (\texttt{len}=$N$)}

		\BlankLine

		\tcc{Loop over unflipped hyperplanes:}

		\For{$i \in$ \adjtest}{

			\BlankLine

			$\ell_r \gets \llbracket A \rrbracket_{[i,:]}$

			\BlankLine

			$\llbracket \mathsf{sel} \rrbracket_{[i,:]} \gets 0$ \tcp{Don't apply slack to $\ell_r$}


			\BlankLine

			\tcc{Check \cref{prop:face_slack} LP:}

			\If{\solvelp{ \\
			$\qquad[0\cdot1, \dots, 0\cdot n, 1]$, \\ $\qquad A(x) + x_s\cdot \mathsf{sel} \leq 0 ~\wedge~ \ell_r(x)\geq 0 ~\wedge~ x_s \geq 0$ \\ $\qquad \wedge_{\ell \in \text{\rm \textsf{addConstr}}} \thinspace ( \ell(x) + x_s \leq 0 )$ \\~}.\optcost{} $> 0$}{

				\tcc{$i$ is a full-dimensional face}

				\tcc{Region index after flipping $i^\text{th}$ hyperplane:}

				$\mathfrak{s}^\prime := \ell \in \mathscr{L} \mapsto \begin{cases}\mathfrak{s}(\ell) & \mathfrak{o}(\ell) \not= i \\  -\mathfrak{s}(\ell) & \mathfrak{o}(\ell) = i \end{cases}$

				\If{$\mathfrak{s}^\prime \not\in $ \tab}{

					\succList.\push{$\mathfrak{s}^\prime$}

					\tab.\insertt{$\mathfrak{s}^\prime$}

				}

			}

			$\llbracket \mathsf{sel} \rrbracket_{[i,:]} \gets 1$ \tcp{Undo selection}

		}

		\Return \succList

	}

\caption{\small$\texttt{Hyperplane Region Enumeration}$}
\label{alg:level_wise_region_enumeration}
\end{algorithm} 
\endgroup
	\end{minipage}
\end{figure}
\DecMargin{0.5em}
 


\subsubsection{Zero Sub-level Set Region Enumeration} 
\label{ssub:zero_sub_level_set_region_enumeration}

Given a hyperplane arrangement, \cref{alg:level_wise_region_enumeration} has 
the desirable properties of identifying connected regions (by exploring via 
shared full-dimensional faces) and incremental region identification (helpful 
when not all regions need be identified). To solve \cref{prob:zero_set_prob}, 
we develop an algorithm that has these properties --- but for regions of 
$(\mathscr{H}_{\nn}, \mathscr{L}_{\nn})$ that intersect $\zsub{\nn}$. That is, 
we modify \cref{alg:level_wise_region_enumeration} so that it:

\begin{itemize}
	\item identifies regions of $(\mathscr{H}_{\nn}, \mathscr{L}_{\nn})$ 
		that are mutually \textbf{connected through the interior of 
		}$\zsub{\nn}$; and

	\item terminates when no more such regions exist.
\end{itemize}
Each of these desired properties requires its own  modification of 
\cref{alg:level_wise_region_enumeration}, which we consider in order below.



First, we modify the way \cref{alg:level_wise_region_enumeration} identifies 
adjacent regions, so that two regions are only ``adjacent'' if they  share a 
(full-dimensional) face \emph{and} that face intersects 
$\text{int}(\zsub{\nn})$; thus, each newly identified region is connected to a 
region in the previous level through $\text{int}(\zsub{\nn})$. From 
\cref{prop:nn_local_linear}, the faces of a region $R$ that intersect  
$\text{int}(\zsub{\nn})$ are determined directly by the linear zero-crossing 
constraint on that region, viz. $\mathcal{T}^{\nn}_R$. Indeed, by continuity of 
$\nn$, the $\mathcal{T}^{\nn}_R$ should be added as an additional linear 
constraint to the LP in \cref{prop:face_slack}. We formalize this as follows.
\begin{proposition}
	\label{prop:zsub_adjacency}
	Let $R$ be a full-dimensional region of $(\mathscr{H}_{\nn},  
	\mathscr{L}_{\nn})$ with indexing function $\mathfrak{s}$. Then 
	$\ell^\prime \in \mathcal{L}_{\nn}$ corresponds to a full-dimensional face 
	of $R$  \textbf{that intersects }$\text{int}(\zsub{\nn})$ iff this LP is 
	feasible with non-zero cost:
	\begin{align}
		\max_{x, x_s} x_s 
		\text{ s.t. } &\wedge_{\ell \neq \ell^\prime} \left( \mathfrak{s}(\ell)\negthinspace \cdot \negthinspace \ell(x) + x_s \leq 0 \right) \wedge (\ell^\prime(x) = 0) \notag \\
		&\wedge (\mathcal{T}^{\nn}_R\negthinspace (x) + x_s \negthinspace \leq \negthinspace 0) \wedge (x_s \geq 0)
		\label{eq:int_feasible_point}
	\end{align}
\end{proposition}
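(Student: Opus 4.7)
The plan is to mirror the structure of the proof of \cref{prop:face_slack}, treating the additional constraint $\mathcal{T}^{\nn}_R(x) + x_s \leq 0$ as the sole new element, and letting it certify that the candidate face point lies in $\text{int}(\zsub{\nn})$. The critical bridge between the face and this new constraint is \cref{prop:nn_local_linear}: $\nn$ coincides with the affine function $\mathcal{T}^{\nn}_R$ on $R$, and by continuity of $\nn$ also on $\overbar{R}$, which contains every face of $R$.

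For the ($\Leftarrow$) direction, I would assume the LP admits a feasible $(x^*, x_s^*)$ with $x_s^* > 0$. Then $\ell^\prime(x^*) = 0$ and $\mathfrak{s}(\ell)\cdot \ell(x^*) \leq -x_s^* < 0$ for every $\ell \neq \ell^\prime$; by \cref{def:face} this places $x^*$ in a full-dimensional face $F$ of $R$ whose indexing function is zero precisely on $\ell^\prime$, and \cref{prop:face_slack} already certifies that $\ell^\prime$ indexes such a full-dimensional face. The new constraint additionally yields $\mathcal{T}^{\nn}_R(x^*) \leq -x_s^* < 0$, and since $x^* \in \overbar{R}$ we conclude $\nn(x^*) = \mathcal{T}^{\nn}_R(x^*) < 0$. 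By continuity of $\nn$, an open ball around $x^*$ lies in $\zsub{\nn}$, so $x^* \in F \cap \text{int}(\zsub{\nn})$, as desired.

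For the ($\Rightarrow$) direction, suppose $\ell^\prime$ corresponds to a full-dim face $F$ of $R$ and pick $x_0 \in F \cap \text{int}(\zsub{\nn})$. By \cref{def:face}, $F$ is the relatively-open set where $\ell^\prime(x) = 0$ and $\mathfrak{s}(\ell)\cdot\ell(x) < 0$ for all other $\ell$, so those strict inequalities already hold at $x_0$. Since $x_0 \in \text{int}(\zsub{\nn})$, some open ball $B \subset \zsub{\nn}$ contains $x_0$; because $\nn$ agrees with the affine $\mathcal{T}^{\nn}_R$ on $\overbar{R} \supseteq B \cap F$, the affine function $\mathcal{T}^{\nn}_R$ is nonpositive on the relatively-open set $B \cap F$, which (except in a degenerate case addressed below) forces the existence of some $x^* \in F$ near $x_0$ with $\mathcal{T}^{\nn}_R(x^*) < 0$ strictly. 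Choosing $x_s^*$ smaller than the minimum of the slacks $\{-\mathfrak{s}(\ell)\cdot\ell(x^*)\}_{\ell \neq \ell^\prime}$ and $-\mathcal{T}^{\nn}_R(x^*)$ then yields a feasible LP solution with positive objective.

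The main obstacle will be establishing strict negativity $\mathcal{T}^{\nn}_R(x^*) < 0$ at some $x^* \in F$ in the forward direction: membership of $x_0$ in $\text{int}(\zsub{\nn})$ only gives $\nn \leq 0$ locally, so care is needed to rule out the degenerate case that $\mathcal{T}^{\nn}_R$ vanishes identically on $H^0_{\ell^\prime}$. In that degenerate case $F$ lies entirely in $\zeq{\nn}$, and can only intersect $\text{int}(\zsub{\nn})$ if $\nn$ plateaus at zero near $x_0$, which can be handled either by a brief separate argument (perturbing along the gradient of $\mathcal{T}^{\nn}_R$ restricted to $H^0_{\ell^\prime}$) or excluded under a mild genericity assumption on the NN's weights; every other step reduces cleanly to the already-proved \cref{prop:face_slack}.
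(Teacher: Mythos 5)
Your proof follows the same two-step structure as the paper's, and your $(\Leftarrow)$ direction matches the paper's argument essentially verbatim: read off $(x^*,x_s^*)$ from the LP, conclude $x^*$ is on a full-dimensional face of $R$ contained in $H^0_{\ell^\prime}$, and use $\mathcal{T}^{\nn}_R(x^*) \le -x_s^* < 0$ plus $\nn|_{\overbar{R}} = \mathcal{T}^{\nn}_R$ to place $x^*$ in $\text{int}(\zsub{\nn})$.

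In the $(\Rightarrow)$ direction you are noticeably more careful than the paper. The paper simply asserts that a point $\hat{x} \in \bigcap_{\ell\neq\ell^\prime} H_{\ell}^{\mathfrak{s}(\ell)} \cap H_{\ell^\prime}^0 \cap H^{-1}_{\mathcal{T}^{\nn}_R}$ exists ``by definition'' and then builds the slack from $\hat{x}$. As you observe, this requires a point of $F$ at which $\mathcal{T}^{\nn}_R$ is \emph{strictly} negative, and that does not follow automatically from $F \cap \text{int}(\zsub{\nn}) \neq \emptyset$, because $\text{int}(\zsub{\nn})$ is the interior of $\{\nn \le 0\}$, not the set $\{\nn < 0\}$. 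Your degenerate case --- $\mathcal{T}^{\nn}_R$ vanishing identically on the affine hull $H^0_{\ell^\prime}$ of $F$ (i.e., $\mathcal{T}^{\nn}_R \equiv 0$ or $H^0_{\mathcal{T}^{\nn}_R} = H^0_{\ell^\prime}$) --- is a genuine gap in the paper's argument: in that case $F \subset \zeq{\nn}$, the constraint $\mathcal{T}^{\nn}_R(x) + x_s \le 0 \wedge \ell^\prime(x) = 0$ forces $x_s \le 0$, so the LP can only achieve zero cost, yet $F$ may still meet $\text{int}(\zsub{\nn})$ when $\nn$ plateaus at zero across $F$. So the ``iff'' can actually fail there, and the paper's ``by definition'' sweeps this under the rug.

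One correction to your proposal, though: the ``brief separate argument'' you sketch --- perturbing along the gradient of $\mathcal{T}^{\nn}_R$ restricted to $H^0_{\ell^\prime}$ --- cannot close this gap, because in exactly the degenerate case that restricted gradient is zero, so there is no direction in which to perturb within $F$. The genericity assumption you also mention is the honest resolution. (It is also worth noting the failure mode is benign for soundness of the overall algorithm: it can only cause the enumeration to terminate early, in which case the downstream check of property \emph{(b)} of \cref{prob:zero_set_prob} will fail rather than falsely certify.)
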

\begin{proof}
	We prove the reverse direction first. Let $(x^*,x^*_s)$ be an optimal 
	solution to \eqref{eq:int_feasible_point} with $x^*_s > 0$. By 
	\cref{def:face},  $(x^*,x^*_s)$ belongs to a face of $R$ contained by 
	$\ell^\prime$, and likewise $(x^*,x^*_s)$ belongs to  
	$\text{int}\zsub{\nn}$ since $\mathcal{T}^{\nn}_R(x^*) \leq -x^*_s < 0$.

	In the other direction, there exists an $\hat{x} \in  
	\cap_{\ell\neq\ell^\prime} H_{\ell}^{\mathfrak{s}(\ell)} \cap  
	H_{\ell^\prime}^0 \cap H^{-1}_{\mathcal{T}^{\nn}_R}$ by definition. Assume 
	that $\mathcal{T}^{\nn}_R \in  \mathscr{L}_{\nn}$ for convenience, and let 
	$\hat{x}_{s,\ell} > 0$ be the slack for each constraint $\ell \neq 
	\ell^\prime$ at $\hat{x}$. Now observe that if we set  $\hat{x}_s =  
	\min_{\ell \neq \ell^\prime} x_{s,\ell}$ then  $(\hat{x},\hat{x}_s)$ is a 
	feasible point for \eqref{eq:int_feasible_point}. Hence, 
	\eqref{eq:int_feasible_point} is feasible and has optimal $x^*_s>0$.
\end{proof}
\noindent \cref{fig:backward_pass} illustrates this adjacency mechanism  (among 
other things). For example, region $R_0$ has adjacent regions $R_1$, $R_2$, 
$R_3$, $R_4$, $R_5$,  $R_6$ and $R_{13}$ according to \cref{prop:face_slack}. 
However, $R_0$ only has adjacent regions $R_1$ and $R_2$ according to 
\cref{prop:zsub_adjacency}, since hyperplanes $2$ and $3$ are the only ones to 
contain faces that intersect  $\text{int}(\zsub{\nn})$.

\cref{alg:level_wise_region_enumeration}, modified by 
\cref{prop:zsub_adjacency}, returns only regions that intersect 
$\text{int}(\zsub{\nn})$, but it is not guaranteed to identify \emph{all} such 
regions. In particular, \cref{prop:zsub_adjacency} ignores certain 
full-dimensional faces for adjacency purposes, and equivalently, prevents the 
associated hyperplanes from being ``flippable'' in certain regions. The effect 
is one of masking the associated connections in the region poset, always  
between a region in one level and a region in the immediate successor level. As 
a result, these ignored faces effectively mask (level-wise) monotonic paths to 
certain regions through the region poset.  This interacts with 
\cref{alg:level_wise_region_enumeration} can only ``flip'' hyperplanes but not 
``un-flip'' hyperplanes --- i.e., proceed only monotonically from lower to 
higher levels. The result is that some regions, even those intersecting  
$\text{int}(\zsub{\nn})$, can be rendered inaccessible if all of their direct 
paths to the base region are masked by \cref{prop:zsub_adjacency}. This 
situation is illustrated in \cref{fig:backward_pass}, which shows how the 
modified algorithm fails to identify region $R_4$. In the top pane of 
\cref{fig:backward_pass}, notice that $R_2$ is discovered from $R_0$ by 
flipping hyperplane 3, and $R_3$ is discovered from $R_2$ by flipping 
hyperplane 1; however, $R_4$ can only\footnote{Indeed, there is no other path 
to $R_4$ by only flipping hyperplanes.} be discovered from $R_3$ by  
\textbf{un-flipping} hyperplane 3. The bottom pane of \cref{fig:backward_pass} 
shows the associated region poset with connections grayed out when they are 
hidden by \cref{prop:zsub_adjacency}.

\begin{figure}[t]
\vspace{6pt}
\centering
\includegraphics[width=\linewidth,trim={0cm 0cm 0cm 0cm},clip]{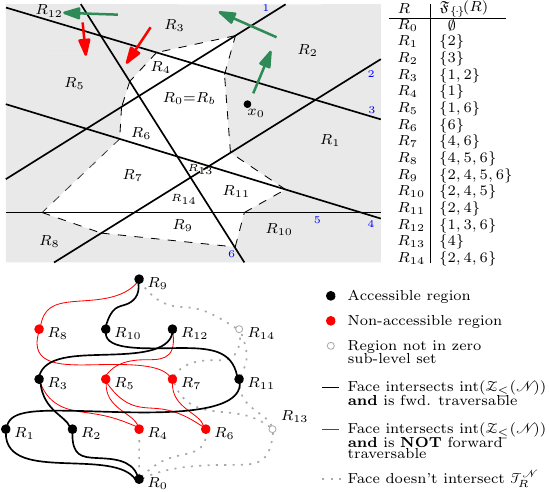}%
\caption{Illustration of the need for a backward pass to identify zero-sub-level sets of a shallow $\nn$. \textbf{Top: Shallow NN hyperplane arrangement.} The zero sub-level set is shaded gray; $R_0$ is the base region of the arrangement; hyperplane indices are shown in blue on the ``positive'' side; $\mathcal{T}^{\nn}_R(x)\negthickspace=\negthickspace 0$ hyperplanes are shown as dashed lines. A table shows $\flipset{R}$ for each labeled region \textbf{Bottom: Corresponding Region Poset (Partial).} Full dimensional regions are shown as nodes; full-dimension faces  as lines between nodes.}
\label{fig:backward_pass}
\end{figure}

Fortunately, \cref{fig:backward_pass} suggests a fix for the 
level-wise-increasing strategy of \cref{alg:level_wise_region_enumeration} --- 
without resorting to exhaustive region enumeration. In  
\cref{fig:backward_pass}, note that regions missed by the ``forward'' pass of 
\cref{alg:level_wise_region_enumeration} are nevertheless accessible by a 
``backward'' pass: i.e., \textbf{unflipping} a single hyperplane for a region 
discovered by the ``forward'' pass (these connections are highlighted in red in 
\cref{fig:backward_pass}). Thus, we propose an algorithm that generalizes this 
idea, and thereby ensures that all connected regions intersecting 
$\text{int}(\zsub{\nn})$ are visited. In particular, we propose 
\cref{alg:zero_sub_enumeration}, which replaces the function  
\texttt{FindSuccessors} of \cref{alg:level_wise_region_enumeration} to maintain 
(conceptually) separate ``forward'' and ``backward'' passes simultaneously. 
In particular, we initiate backward passes only through those faces of a 
region, $R$, which intersect both $H^0_{\mathcal{T}^{\nn}_R}$ \emph{and}  
$\text{int}(\zsub{\nn})$; this prevents backwards passes from being instigated 
on every unflipped hyperplane for each region.
Moreover, we employ two procedures to reduce the number regions from which 
backward passes are initiated. First, we precede each backward pass with a 
single LP that checks the region for \emph{any} intersection with its 
$H^0_{\mathcal{T}^{\nn}_R}$ hyperplane; second, we mark each region with the 
flipped or unflipped hyperplanes that discovered it, so these don't need to be 
unflipped or flipped again (omitted from \cref{alg:zero_sub_enumeration}).

\setlength{\textfloatsep}{0pt}
\IncMargin{0.5em}
%
%
\begin{figure}
 \vspace{4pt}
	\begin{minipage}[t]{\linewidth}
\begingroup %
\removeonelatexerror
\begin{algorithm}[H]
	\SetKw{Break}{break}
	\SetKw{NOT}{not}
	\SetKw{foriter}{for}
	\SetKw{OR}{or} 
	\SetKw{IN}{in}
	\SetKw{IS}{is}
	\SetKw{CONT}{continue}
	\SetKw{GLOBAL}{global}

	\SetKwData{false}{False}
	\SetKwData{true}{True}
	\SetKwData{feas}{Feasible}

	\SetKwData{tab}{T}
	\SetKwData{tabl}{table}
	\SetKwData{relaxed}{relaxedConst}
	\SetKwData{succList}{successorList}
	\SetKwData{predList}{predecessorList}
	\SetKwData{flips}{flips}
	\SetKwData{sel}{sel}
	\SetKwData{addconstr}{addConstr}
	\SetKwData{adjtest}{testHypers}

	\SetKwFunction{enum}{EnumerateRegions}
	\SetKwFunction{succ}{FindSuccessors}
	\SetKwFunction{succFwdBkwd}{FindSuccessorsFwdBkwd}
	\SetKwFunction{solvelp}{SolveLP}
	\SetKwFunction{optcost}{cost}
	\SetKwFunction{len}{Length}
	\SetKwFunction{push}{push}
	\SetKwFunction{insertt}{insert}
	\SetKwFunction{appendd}{append}

	\SetKwInOut{Input}{Input}
	\SetKwInOut{Output}{Output}
	\Input{
		$\mathscr{L}_{\nn}$, set of affine functions for a shallow NN arrangement $(\mathscr{H}_{\nn},\mathscr{L}_{\nn})$; and \\
		$\mathfrak{s}_0$, indexing function for a valid region $R_0 \in \mathscr{R}$. \\
	}

	\Output{
		\tab, hash table of indexing functions for all \\
		$\qquad$full-dimension regions of the arrangement.
	}

	\BlankLine

	\GLOBAL{\tab $\gets \{\}$}


	\BlankLine

	\SetKwProg{Fn}{function}{}{end}%

	\BlankLine

	\tcc{Use \enum from \cref{alg:level_wise_region_enumeration} but 
	replace \succ (line 9) with:}

	\BlankLine


















		


	\Input{
		$\mathscr{L}_{\nn}$, affine functions for NN hyperplane arrangement; \\
		$\mathfrak{s}$, indexing function for a valid region. \\
	}

	\Output{
		\succList, list of \textbf{new} region index fns. adjacent \\
		$\qquad$to $\mathfrak{s}$ in the next higher/lower region poset level
	}

	\Fn{\succFwdBkwd{$\mathscr{L}_{\nn}$, $\mathfrak{s}$}}{

		\succList $\gets \{\}$

		\tcc{Flip hyperplanes to get constraints for region $R_\mathfrak{s}$ given by $\mathfrak{s}$:}

		$A \gets \left[
			\begin{smallmatrix}
				\mathfrak{s}(\mathfrak{o}^{-1}(1)) \cdot \mathfrak{o}^{-1}(1)(x) & \dots & \mathfrak{s}(\mathfrak{o}^{-1}(N)) \cdot \mathfrak{o}^{-1}(N)(x)
			\end{smallmatrix}
		\right]^\text{T}$



		\BlankLine

		\tcc{Perform forward pass from current region (note additional constraint from \cref{prop:zsub_adjacency}):}

		\succList.\appendd{ \\
			$\quad$\succ{$\mathscr{L}_{\negthickspace\nn}$, $\mathfrak{s}$, \addconstr $=\{\mathcal{T}^{\nn}_{R_\mathfrak{s}}\}$} \\
		}

		\BlankLine

		\predList $\gets \{\}$

		\BlankLine

		\tcc{Backward pass}

		\If{\solvelp{$[0\cdot1, \dots, 0\cdot n, 1]$, $A(x) + x_s \leq 0$ $~\wedge~$ $\mathcal{T}^{\nn}_{R_\mathfrak{s}}(x) + x_s \leq 0 ~\wedge~ x_s \geq 0$}.\optcost{} $> 0$ }{








			\BlankLine

			\tcc{Check only the faces intersecting $\mathcal{T}^{\negthinspace\nn}_{R_\mathfrak{s}}(x)=0$}

			\predList = \\ \succ{$\mathscr{L}_{\negthickspace\nn}$, $\mathfrak{s}$, \adjtest$=\flipset{R_\mathfrak{s}}$, \\ $\qquad\qquad$\addconstr $\negthinspace=\negthinspace\{\negthinspace\mathcal{T}^{\negthinspace\nn}_{R_\mathfrak{s}}\negthinspace\}$}

		}

		\succList.\appendd{\predList}

		\BlankLine

		\Return \succList

	}

\caption{\small$\texttt{Sub-Level Set Region Enumeration}$}
\label{alg:zero_sub_enumeration}
\end{algorithm} 
\endgroup
	\end{minipage}
\end{figure}
\DecMargin{0.5em}
 

The correctness of \cref{alg:zero_sub_enumeration} follows from the following 
theorem, the proof of which appears in \cref{sub:proof_of_main_theorem}.

\begin{theorem}
	\label{thm:main_theorem}
	Let $(\mathscr{H}_{\negthinspace\nn}, \mathscr{L}_{\negthinspace\nn})$ by a 
	hyperplane arrangement for a shallow NN, $\nn$ (see  
	\cref{prop:nn_local_linear}), and let $x_0$ be a point for which  $\nn(x_0) 
	< 0$. Assume WOLG that $x_0 \in R_b$, the base region of  
	$\mathscr{H}_{\negthinspace\nn}$.

	Then \cref{alg:zero_sub_enumeration} returns all regions of  
	$\mathscr{H}_{\negthinspace\nn}$ that intersect the connected component of  
	$\text{int}(\zsub{\nn})$ containing $x_0$.
\end{theorem}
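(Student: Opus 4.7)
The plan is to show that the table returned by \cref{alg:zero_sub_enumeration} equals $\mathcal{R}_C \triangleq \{R \in \mathscr{R} : R \cap C \neq \emptyset\}$, where $C$ is the connected component of $\text{int}(\zsub{\nn})$ containing $x_0$. I would establish this via two separate inductions: soundness ($T \subseteq \mathcal{R}_C$) and completeness ($\mathcal{R}_C \subseteq T$).

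For soundness I would induct on the order in which regions are inserted into $T$. The base region $R_b$ contains $x_0 \in C$, so $R_b \in \mathcal{R}_C$. For the inductive step, any newly-added $R^\prime$ is discovered from some parent $R$ via either the forward or the backward branch of \texttt{FindSuccessorsFwdBkwd}; both branches invoke \texttt{FindSuccessors} with $\text{addConstr} = \{\mathcal{T}^{\nn}_{R_\mathfrak{s}}\}$, which is precisely the LP of \cref{prop:zsub_adjacency}. Hence the shared face $F$ between $R$ and $R^\prime$ meets $\text{int}(\zsub{\nn})$, and combined with $R \cap C \neq \emptyset$ (by induction) and the openness of $\text{int}(\zsub{\nn})$ near $F$, the sets $R \cap \text{int}(\zsub{\nn})$, $F \cap \text{int}(\zsub{\nn})$, and $R^\prime \cap \text{int}(\zsub{\nn})$ are jointly path-connected inside $\text{int}(\zsub{\nn})$, so $R^\prime \in \mathcal{R}_C$.

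For completeness, I would fix any $R^* \in \mathcal{R}_C$, pick $p \in R^* \cap C$, and choose a continuous path $\gamma : [0,1] \to C$ from $x_0$ to $p$. A standard general-position perturbation of $\gamma$ inside the open set $C$ lets me assume it crosses hyperplanes of $\mathscr{H}_{\nn}$ one at a time and transversally, producing a sequence $R_b = R_0, R_1, \dots, R_k = R^*$ with each $R_i \in \mathcal{R}_C$ and each consecutive pair sharing a full-dimensional face meeting $\text{int}(\zsub{\nn})$. I then induct on $i$: assuming $R_i \in T$, the main loop eventually passes $R_i$ to \texttt{FindSuccessorsFwdBkwd}. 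If $R_{i+1}$ is reached by flipping one hyperplane in $\unflipset{R_i}$, the forward branch discovers it via \cref{prop:zsub_adjacency}; if instead $R_{i+1}$ is reached by unflipping one hyperplane in $\flipset{R_i}$, then $R_i \cap \text{int}(\zsub{\nn}) \neq \emptyset$ makes the gating LP of the backward branch strictly feasible, and the subsequent call with $\text{testHypers} = \flipset{R_i}$ discovers $R_{i+1}$ by the same proposition.

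The step I expect to be most delicate is that \cref{alg:zero_sub_enumeration} is no longer level-monotone: a backward pass produces predecessors in a lower poset level, and these must themselves be reprocessed so that further backward or forward hops can occur. I would address this by observing that the outer \texttt{while}-loop inherited from \cref{alg:level_wise_region_enumeration} appends every newly discovered indexing function to the working list $\mathscr{V}^\prime$ regardless of which branch produced it, so the zigzag path $R_0, \dots, R_k$ is traversed faithfully; hash-table deduplication and the finiteness of $\mathscr{R}$ then ensure termination. A secondary technical point is justifying the general-position perturbation of $\gamma$, which follows because $C$ is open and the union of codimension-$\geq 2$ strata of $\mathscr{H}_{\nn}$ has measure zero.
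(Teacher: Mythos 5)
Your proof is correct, but it takes a genuinely different route from the paper's. The paper argues by contradiction: after formalizing the algorithm's output as a closed set $\mathcal{o}$, it defines the undiscovered set $D \triangleq C \cap \mathcal{o}^{\text{C}}$, examines the faces of $\text{bd}(D)$ shared with $\mathcal{o}$, and shows that at least one such face must have triggered either a forward pass (if the corresponding hyperplane is unflipped on the $\mathcal{o}$ side) or a backward pass; for the latter, the paper introduces \emph{fold-back faces} (\cref{def:fold_back_face}) and further restricts the backward passes to only follow fold-back faces, which is safe because that restriction can only shrink the output. Your argument is constructive and more elementary: you fix a target region $R^* \in \mathcal{R}_C$, put a connecting path into general position inside the open set $C$, and induct along the resulting region sequence, verifying at each crossing that the LP of \cref{prop:zsub_adjacency} fires for the appropriate (forward or gated-backward) branch, and correctly noting that the working list re-queues regions discovered by either branch. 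This sidesteps both the contradiction apparatus and the fold-back concept, which makes the completeness argument cleaner to audit; the paper's version, by contrast, makes visible why the backward pass must exist at all, which serves the exposition. Two small remarks: first, the theorem as stated only asserts completeness ($\mathcal{R}_C \subseteq T$), which is all the paper proves, so your soundness induction is a useful companion but is not required for the claim; second, in the soundness step, it is cleaner to phrase the ``jointly path-connected'' argument in terms of $\{x : \nn(x) < 0\}$ rather than $\text{int}(\zsub{\nn})$, since the former intersected with a region is convex (because $\nn$ is affine on that region), whereas $\text{int}(\zsub{\nn})$ need not coincide with $\{x : \nn(x) < 0\}$.
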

\begin{proof}
	See \cref{sub:proof_of_main_theorem}.
\end{proof}



\ActivateWarningFilters[pdftoc]
\subsection{Checking Property (a) of \cref{prob:zero_set_prob}} 
\label{sub:checking_property_a}
\DeactivateWarningFilters[pdftoc]

For \cref{prob:zero_set_prob}\emph{(a)}, the additional, point-wise property 
that we need to check during \cref{alg:zero_sub_enumeration} is containment of 
the connected component of $\zsub{\nn}$ within $X_\partial$. Note that 
\cref{alg:zero_sub_enumeration} effectively returns a set $X_c$ such that $X_c 
= \overbar{\text{int}(X_c)}$ and $\text{bd}(X_c) \subseteq \zeq{\nn}$, so the 
main criterion of \cref{prob:zero_set_prob}\emph{(a)} is satisfied; see 
\cref{thm:main_theorem}.

However, \cref{alg:zero_sub_enumeration} can be trivially modified to identify 
only regions that intersect a separate convex polytope. This entails augmenting 
the arrangement with hyperplanes containing the polytope faces, and always 
treating those hyperplanes as ``flipped''  
(\cref{alg:level_wise_region_enumeration}, line  20). It then suffices to test 
each region returned by \cref{alg:zero_sub_enumeration} to see if it has a face 
among these unfippable polytope faces. If any region has such a face, then the 
identified component of $\text{int}(\zsub{\nn}) \not\subset X_\partial$; 
otherwise, \cref{alg:zero_sub_enumeration} verifies \emph{(a)}.


\ActivateWarningFilters[pdftoc]
\subsection{Checking Property (b) of \cref{prob:zero_set_prob}} 
\label{sub:checking_property_b}
\DeactivateWarningFilters[pdftoc]

To check \cref{prob:zero_set_prob}\emph{(b)}, we need to consider points 
outside the component $X_c \subseteq \zsub{\nnbf}$ (obtained from 
\cref{prob:zero_set_prob}\emph{(a)}) but inside a $\max$-norm ball of radius 
given in \eqref{eq:main_reach_bound}. For this, we can use 
\cref{alg:zero_sub_enumeration} on $-\nnbf(x)$, and interpret the $\max$-norm 
ball as a containing polytope (viz. hypercube) as in 
\cref{sub:checking_property_a}. For this run of 
\cref{alg:zero_sub_enumeration}, the positivity of $\nnbf$ (negativity of  
$-\nnbf$) can be checked on each region with a single LP. Thus, $\nnbf$ is 
positive on $C_{x_0}\backslash X_c$ if every region so produced passes this 
test.

It only remains to compute the radius of the $\max$-norm ball $C_{x_0}$ in  the 
first place. According to \eqref{eq:main_reach_bound} the main quantities we 
need to compute are $\sup_{x\in X_c}\lVert x - x_0 \rVert$ and $\sup_{x \in 
X_c} \lVert \nnvec(x_0) - x \rVert$; $\lVert \nnvec \rVert$, the Lipschitz 
constant of $\nnvec$ can be estimated in the trivial way or by any other 
desired means. Fortunately, both quantities involve computing the $\max$-norm 
of shifted versions of the set $X_c$. By the properties of a norm, these 
quantities can be derived directly from a coordinate-wise bounding box for 
$X_c$. Such a bounding box for $X_c$ can in turn can be computed directly from 
the regions discovered in our solution to \cref{prob:zero_set_prob}\emph{(a)}: 
simply use two LPs per dimension to compute the bounding box of each region, 
and then maintain global $\min$'s and $\max$'s of these quantities over all 
regions.





\section{Experiments}
\label{sec:experiments}

In order to validate the utility and efficiency of our algorithm, we conducted 
two types of experiments. \cref{sub:case_studies} contains case studies on two 
real-world control examples: control of an inverted pendulum in  
\cref{ssub:pendulum} and control of a steerable bicycle model 
\cref{ssub:bicycle}. This analysis is supplemented by scalability experiments 
in \cref{sub:scalability}, which evaluate the scalability of the novel 
algorithm presented in \cref{sec:hyperplane_region_enumeration}.

All experiments were run on a 2020 Macbook Pro with an Intel i7 processor and 
16Gb of RAM. In all experimental runs, the code implementing algorithms in 
\cref{sec:forward_reachability} was run directly on the host OS; by contrast,  
the code implementing algorithms from \cref{sec:hyperplane_region_enumeration} 
was run in a Docker container. All code is available in a Git  
repository\footnote{ (REDACTED FOR REVIEW) %
} %
which provides instructions to create a Docker container that can execute all 
code mentioned above (including from \cref{sec:forward_reachability}).

\subsection{Case Studies} 
\label{sub:case_studies}

Our algorithm considers autonomous system dynamics described by a ReLU NN 
vector field (\cref{prob:main_problem}) and a (shallow) ReLU NN candidate 
barrier function. Thus, in all case studies we obtain these functions via the 
following two steps: first, by training a ReLU NN to approximate the true 
open-loop system dynamics; and second, by jointly training a ReLU NN controller 
(which produces autonomous NN system in closed loop) and a shallow ReLU NN  
barrier function. Note that the closed-loop composition of a controlled NN 
vector field with a NN controller is also a NN, albeit not a shallow NN.

To obtain a controlled vector field in ReLU NN form, we start with each case 
study's actual discrete-time system dynamics (see \eqref{eq:ds_pendulum} and 
\eqref{eq:ds_bicycle}), given in general by:
\begin{equation}
\label{eq:system_nn}
	x_{t+1} = f(x_t, u_t) \in \mathbb{R}^n \quad \text{with} \quad u_t \in U \subseteq \mathbb{R}^m
\end{equation}
\noindent and define $\mathcal{X}$ as a subset of the state space that contains 
the appropriate set safe states, $X_s$, as well as other states of interest. We 
then uniformly sample $\mathcal{X} \times U$ to obtain $K = 2000$ data points 
$\{(\hat{\text{\textbf{\textsf{x}}}}_k,  
\hat{\text{\textbf{\textsf{u}}}}_k)\}_{k=1}^{K}$, which we subsequently use to 
train a ReLU NN $\nnol$ that minimizes the mean-square loss function  
$\sum_{k=1}^{K}\rVert f(\hat{\text{\textbf{\textsf{x}}}}_k, 
\hat{\text{\textbf{\textsf{u}}}}_k) - \nnol(\hat{\text{\textbf{\textsf{x}}}}_k, 
\hat{\text{\textbf{\textsf{u}}}}_k)\lVert_2^2$. In all case studies, $\nnol$ is 
a shallow NN architecture with 64 neurons in the hidden layer.

Given the NN open-loop dynamics $\nnol$, we then use the method in 
\cite{AnandZamani2023} to simultaneously train a time-invariant feedback 
controller $\nncontroller: \mathbb{R}^m \rightarrow \mathbb{R}^n$ and a 
candidate barrier function $\nnbf: \mathbb{R}^n \rightarrow \mathbb{R}$; the 
architectures of $\nncontroller$ and $\nnbf$ are described in each case study. 
From $\nncontroller$ and $\nnol$, we obtain the autonomous NN vector field as: 
$x_{t+1} = \nnvec(x_t) \triangleq \nnol(x_t,\nncontroller(x_t)).$

\subsubsection{Inverted Pendulum}
\label{ssub:pendulum}
Consider an inverted pendulum with states for angular position, $x_1$, and 
angular velocity, $x_2$, of the pendulum and a control input, $u$, providing an 
external torque on the pendulum. These are governed by discretized open-loop  
dynamics:
\begin{equation}
\label{eq:ds_pendulum}
	\left[
	\begin{smallmatrix}
		x_1(t + 1) \\
		x_2(t + 1)
	\end{smallmatrix}
	\right]
	=
	\left[
	\begin{smallmatrix}
		x_1(t) + \tau \cdot x_2(t) \\
		x_2(t) + \tau \cdot \left(\frac{g}{l} \sin(x_1(t)) + \frac{1}{m l^2} u\right)
	\end{smallmatrix}
	\right] 
\end{equation}
where $m=1\thickspace\mathtt{kg}$ and $l=1\thickspace\mathtt{m}$ represent the 
mass and length of the pendulum respectively,  
$g=9.8\thickspace\mathtt{m}/\mathtt{s}^2$ is the gravitational acceleration and 
$\tau=0.01\thickspace\mathtt{s}$ is the sampling time.

In this case study, we are interested in stabilizing the inverted pendulum 
around $(x_1,x_2) = (0,0)$ while keeping it in the safe region $X_s = [-\pi/6, 
\pi/6]^2$, so we define \(\mathcal{X} = [-\pi/4, \pi/4]^2\) and the control 
constraint $U = [-10, 10]\thickspace \mathtt{rad}/\mathtt{s}^2$. We  proceed to 
train ReLU open-loop dynamics, $\nnol$, as above; then using 
\cite{AnandZamani2023}, we train both a stabilizing controller, 
${\mathscr{N}_c}$ (shallow ReLU NN, 5 neurons), and a barrier candidate, 
$\nnbf$ (shallow ReLU NN, 20 neurons).

Our algorithm certified the green set depicted in \cref{fig:zero_level_2d}  
(Left) as a forward invariant set of states. In particular, our algorithm  
(\cref{sec:forward_reachability}) produced $X_\partial = X_s$ as a verified 
solution to \cref{prob:neg_deriv_prob} (white set in \cref{fig:zero_level_2d}  
(Left)), for which our algorithm took 1.2 seconds and produced 25 partitions. 
Our algorithm  (\cref{sec:hyperplane_region_enumeration}) then produced the 
aforementioned green set as a verified solution to \cref{prob:zero_set_prob} in 
8.27 seconds using a  Lipschitz constant estimate of 0.8 and $x_0 = (0,0)$; it 
thus certifies $\nnbf$ as a barrier for that set.

\begin{figure}[t]
	\centering
	\includegraphics[width=0.53\linewidth]{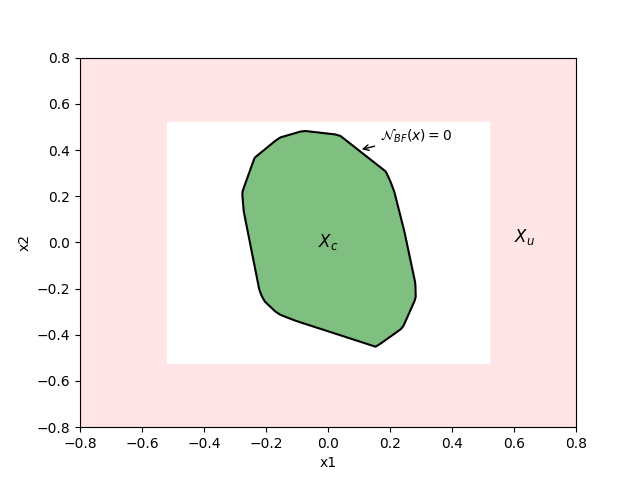} %
	\includegraphics[width=0.46\linewidth,trim={1.4in 0.51in 0.8in 0.51in},clip]{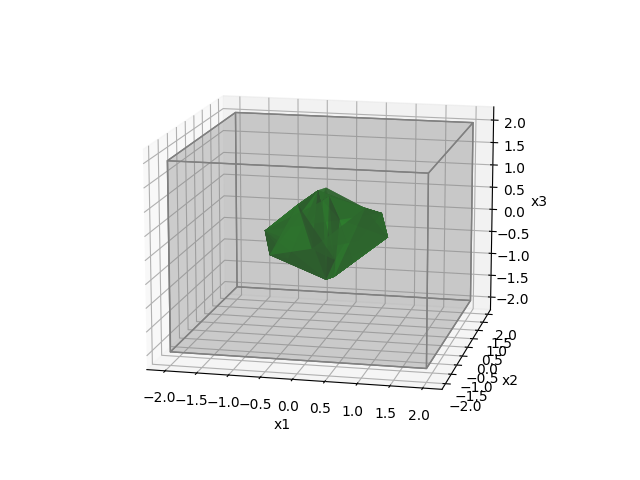}
	\caption{Certified forward invariant sets are shown in green for the inverted pendulum case study (Left) and steerable bicycle case study (Right); both sets are contained the set of safe states $X_s$, as defined in each case study ($X_s$ is shown as a white/grey box). The green sets are zero-sub-level sets of the trained $\nnbf$, and are returned by our algorithm.}
	\label{fig:zero_level_2d}
\end{figure}

\subsubsection{Steerable Bicycle}
\label{ssub:bicycle}
Consider a steerable bicycle viewed from a frame aligned with its direction of 
travel; this system has states for tilt angle of the bicycle in a plane  normal 
to its direction of travel, $x_1$, the angular velocity of that tilt, $x_2$ and 
the angle of the handlebar with respect to the body, $x_3$ and a control  input 
$u$ for the steering angle. These are governed by the open-loop dynamics:
\begin{multline} \label{eq:ds_bicycle}
\left[
\begin{smallmatrix}
	x_1(t + 1) \\
	x_2(t + 1) \\
	x_3(t + 1)
\end{smallmatrix}
\right]
=
\left[
\begin{smallmatrix}
	x_2(t) \\
	\frac{ml}{J} \left(g \sin(x_1(t)) + \frac{v^2}{b} \cos(x_1(t)) \tan(x_3(t))\right) \\
	0
\end{smallmatrix}
\right]
\\ +
\left[
\begin{smallmatrix}
	0 \\
	\frac{amlv}{Jb} \cos(x_1(t)) \cos^2(x_3(t)) \\
	1
\end{smallmatrix}
\right]
u;
\end{multline}
where $m=20\thickspace\texttt{kg}$ is the bicycle's mass,  
$l=1\thickspace\texttt{m}$ its height, $b=1\thickspace\texttt{m}$ its wheel 
base, $J = \tfrac{ml}{3}$ its moment of inertia,  
$v=10\thickspace\texttt{m}/\texttt{s}$ its linear velocity,  
$g=9.8\thickspace\texttt{m}/\texttt{s}^2$ is the acceleration of gravity, and 
$a = 0.5$. 

In this case study we are seek to stabilize the bicycle in its vertical 
position while keeping it in the safe region $X_s = [-2, 2]^3$, so we define  
$\mathcal{X} = [-2.2, 2.2]^3$ and the control constraint is $U = [-10,10]$. We  
train ReLU open-loop dynamics, $\nnol$, as above; then using 
\cite{AnandZamani2023}, we train a stabilizing controller, ${\mathscr{N}_c}$ 
(shallow ReLU NN, 5 neurons), and a barrier candidate, $\nnbf$ (shallow ReLU 
NN, 10 neurons).

Our algorithm certified the green set depicted in \cref{fig:zero_level_2d}  
(Right) as a forward invariant set of states. In particular, our algorithm 
(\cref{sec:forward_reachability}) produced $X_\partial = X_s$ as a verified 
solution to \cref{prob:neg_deriv_prob} (grey set in \cref{fig:zero_level_2d}  
(Right)), for which our algorithm took 9.52 seconds and produced  125 
partitions. Our algorithm  (\cref{sec:hyperplane_region_enumeration}) then 
produced the aforementioned green set as a verified solution to 
\cref{prob:zero_set_prob} in 8.76 seconds using a  Lipschitz constant estimate 
of 0.78 and $x_0 = (0,0,0)$; it thus certifies $\nnbf$ as a barrier for that 
set.





\subsection{Scalability Analysis}
\label{sub:scalability}



\begin{figure}[t]
	\centering
	\includegraphics[width=0.65\linewidth,trim={1.9in 0.2in 10.4in 0.7in},clip]{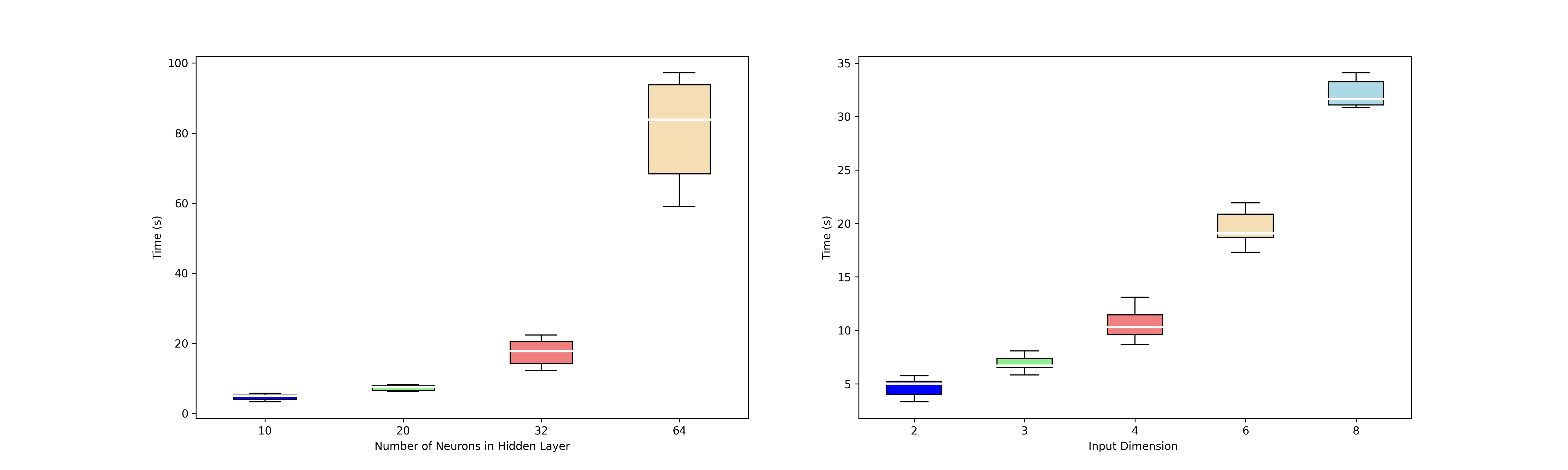}
	\caption{Zero-sub-level Set Verification Time}
	\label{fig:levelset_neurons}
\end{figure}

\begin{figure}[t]
	\centering
	\includegraphics[width=0.65\linewidth,trim={10.4in 0.2in 1.9in 0.7in},clip]{levelset_time.png}
	\caption{Zero-sub-level Set Verification Time}
	\label{fig:levelset_dimension}
\end{figure}

Our algorithm for \cref{prob:neg_deriv_prob}  (see 
\cref{sec:forward_reachability}) is based on an existing tool (viz. CROWN 
\cite{ZhangEfficientNeuralNetwork2018}), so we focus our scalability study on 
our novel algorithm for solving \cref{prob:zero_set_prob}  (see 
\cref{sec:hyperplane_region_enumeration}), i.e. certifying zero-level-sets  for 
shallow NN barrier functions. We study scaling both in terms of the candidate 
barrier NN's input dimension (for a fixed number of neurons) and in the number 
of neurons in the candidate barrier NN (for a fixed input  dimension).

To conduct this experiment, we trained a number of ``synthetic'' candidate 
barrier function NNs with varying combinations of input dimension and number of 
hidden-layer neurons. We refer to these as synthetic barriers, since they were 
created without reference to any particular dynamics or control problem; i.e. 
they were all trained on datasets of $K=500$, 
$\{(\hat{\mathbf{x}}_k,\hat{\mathbf{y}}_k)\}_{k=1}^{K}$ such that  
$\hat{\mathbf{x}}_k \in [-1,1]^d \implies \hat{\mathbf{y}}_k = -1$ and 
$\hat{\mathbf{x}}_k \not\in [-1,1]^d \implies \hat{\mathbf{y}}_k = 1$. This 
nominally incentivizes the hypercube $[-1,1]^d$ to be contained in their 
zero-sub-level set. The rest of the inputs required for our algorithm were 
generated as follows -- see \eqref{eq:main_reach_bound} and recall there is no 
referent closed-loop dynamics: the Lipschitz estimate was chosen uniformly from 
$[0, 1.2]$; the initial point was $x_0 = (0,  \dots, 0)$; the ``next state'' 
from $x_0$ was generated via a coordinate-wise offset from $x_0$ drawn 
uniformly from $[0,0.1]$; and a ``synthetic'' set $X_\partial$ was generated as 
a single $[-10,10]^d$ hyperrectangle for $d>3$ and four manually specified  
hyperrectangles for $d \leq 3$.

\cref{fig:levelset_neurons} summarizes our neuron scaling experiment with a 
box-and-whisker plot of NN barrier candidate size (in neurons) vs. execution 
time (in seconds) for our zero-sub-level set algorithm. All NN barrier 
candidates are synthetic NN barrier candidates as described above with a common 
input dimension of 2. This experiment confirms that our algorithm and its 
implementation scale similarly to hyperplane region enumeration, i.e. $O(N^d)$ 
where $N$ is the number of neurons; for example, the median runtime for $N=64$ 
is roughly $4$ time the median runtime for $N=32$ neurons.

\cref{fig:levelset_dimension} summarizes our dimension scaling experiment with 
a box-and-whisker plot of NN barrier candidate input dimension vs. execution 
time (in seconds) for our zero-sub-level set algorithm. All NN barrier 
candidates are synthetic NN barrier candidates as described above with a 10 
hidden layer neurons. This experiment confirms our algorithm and its 
implementation scale as hyperplane region enumeration, viz. exponentially in 
input dimension. 



 %


\ActivateWarningFilters[pdftoc]
\section{Appendix: Proof of \cref{thm:main_theorem}} 
\label{sub:proof_of_main_theorem}
\DeactivateWarningFilters[pdftoc]


To facilitate the proof, we introduce the following definitions.

\begin{definition}[Fold-Back Face]
	\label{def:fold_back_face}
	Let $(\mathscr{H}_{\nn}, \mathscr{L}_{\nn})$ be a hyperplane arrangement 
	based on a shallow NN, $\nn$. Also, let $R$ be a region of this arrangement 
	(with indexing function $\mathfrak{s}$), and let $F$ be a  
	(full-dimensional) face of $R$.

	Then $F$ is a \textbf{fold-back face} of $R$ if $\exists \ell \in  
	\fliph{R}$ such that
	\begin{equation}
	\label{eq:fold_back_face}
		F \subset H^0_{\ell} \;\wedge\;
		H^0_{\mathcal{T}^{\negthinspace\nn}_{R}} \cap \bar{F} \neq \emptyset \;\wedge\;
		H^{-1}_{\mathcal{T}^{\negthinspace\nn}_{R}} \cap F \neq \emptyset.
	\end{equation}
	Note the closure of $F$ in the second condition.
\end{definition}

\begin{definition}[Fold-Back Region]
	\label{def:fold_back_region}
	Let $(\negthinspace\mathscr{H}_{\nn},\negthinspace  
	\mathscr{L}_{\nn}\negthinspace)$ be a hyperplane arrangement for the 
	shallow NN, $\negthinspace\nn\negthickspace$. A region of this arrangement 
	is a \textbf{fold-back region} if it has at least one fold-back face.
\end{definition}

\begin{remark}
	``Fold-back'' is meant to evoke the case illustrated in 
	\cref{fig:backward_pass}: e.g. $R_4$ is a fold-back region of $R_3$, 
	because the boundary of $\zsub{\nn}$ is ``folded back'' across an already 
	flipped hyperplane in $R_3$.
\end{remark}
\noindent Now we proceed with the proof of \cref{thm:main_theorem}.
\begin{proof}(\cref{thm:main_theorem}.)
We need to show that the hash table created by \cref{alg:zero_sub_enumeration} 
contains all of the regions of $(\mathscr{H}_{\nn}, \mathscr{L}_{\nn})$ that 
intersect the connected component $C \subseteq \text{int}(\zsub{\nn})$  where 
$x_0 \in C $.

To do this, first observe that \cref{alg:zero_sub_enumeration} adds regions to 
the table by exactly two means: the ``forward'' pass, which calls  
\texttt{FindSucces}-\texttt{sors} on $\unflipset{R}$ (see line 9); and the 
``backward'' pass, which calls \texttt{FindSuccessors} on $\flipset{R}$ (see 
line 16). Moreover, \cref{alg:zero_sub_enumeration} performs at most one of 
each \texttt{FindSuc}-\texttt{cessors} call per region, and the returned table 
is the union of all regions discovered by these calls. Thus, the output of 
\cref{alg:zero_sub_enumeration} is equivalent to repeating the following 
two-step sequence until the table no longer changes: iteratively performing  
forward passes of \texttt{FindSuccessors} until the table no longer changes; 
followed by iteratively performing backward passes of \texttt{FindSuccessors} 
until the table no longer changes.

Furthermore, to facilitate this proof, we assume backward passes only add 
regions connected via fold-back faces. Since this algorithmic modification 
creates a region table that is a subset of that created by 
\cref{alg:zero_sub_enumeration}, it suffices to prove the claim in this case.

With this in mind, we define the following notation.
\begin{align}
	\mathcal{f}_k &: R \subset \mathscr{R} \mapsto \label{eq:fwd_pass_notation} \\
	&\scriptstyle\bigcup_{R^\prime \in \mathcal{f}_{k-1}(R)} \negthickspace
	\big\{ R^{\prime\prime} \cap H^{-1}_{\mathcal{T}^{\nn}_{R^{\prime\prime}}} ~ | ~ R^{\prime\prime} \in 
	\scriptstyle\mathtt{FindSuccessors}(\mathscr{L}, R^\prime, \mathsf{testHypers}=\flipset{R^\prime})
	\big\}
	\notag \\
	\mathcal{f}_0 &: R \subset \mathscr{R} \mapsto \{ R \}
	\label{eq:fwd_pass_notation_base} \\
	\mathcal{f} &: R \mapsto \cup_{k=0}^{\infty} \mathcal{f}_k(R)
\end{align}
We likewise define $\mathcal{b}_k$, $\mathcal{b}_0$ and $\mathcal{b}$ based on 
backward passes, i.e. using $\unflipset{R}$ in \eqref{eq:fwd_pass_notation}. In 
this way, we can describe the overall output of \cref{alg:zero_sub_enumeration} 
(for the purposes of this proof) using the notation:
\begin{align}
	\mathcal{o}_{k\hphantom{-}} &: \begin{cases}
		\mathcal{f}(\mathcal{o}_{k-1}) \backslash \cup_{\nu = 1}^{k-2} \mathcal{o}_\nu & \text{if } k \in \mathbb{N} \text{ is odd} \\
		\mathcal{b}(\mathcal{o}_{k-1}) \backslash \cup_{\nu = 1}^{k-2} \mathcal{o}_\nu & \text{if } k \in \mathbb{N} \text{ is even}
	\end{cases} 
	\label{eq:both_passes} \\
	\mathcal{o}_{0\hphantom{-}} &\triangleq \{R_b\}
	\label{eq:both_pass_base} \\
	\mathcal{o}_{-1} &\triangleq \emptyset.
\end{align}
where $R_b$ is the base region of  
$(\mathscr{H}_{\negthinspace\nn},\mathscr{L}_{\negthinspace\nn})$ as usual. 
Thus, the union of the table output by (the restricted version of) 
\cref{alg:zero_sub_enumeration} is:
\begin{equation}
\label{eq:o_definition}
	\mathcal{o} \triangleq \cup_{k = 1}^L \bar{\mathcal{o}}_k
\end{equation}
where $L$ is the first integer such that $\mathcal{o}_L = \mathcal{o}_{L-1} = 
\emptyset$.

Now let $p : [0,1] \rightarrow C  \subseteq \text{int}(\zsub{\nn})$ be a 
continuous curve between two points $p(0), p(1) \in C$. To prove the claim, it 
suffices to show that $p(0), p(1) \in \text{int}(\mathcal{o})$, and hence the 
connected component $C \subseteq \text{int}(\mathcal{o})$ because  
$\text{int}(\mathcal{o})$ is connected by construction: that is, every point in 
$C$ is connected through $\text{int}(\mathcal{o})$. The reverse direction is 
true by construction. We can also assume without loss of generality that $p(0) 
= x_0$; also let $x_f := p(1)$ for any $p(1)$ as above.

We now proceed by contradiction: that is, we suppose that $x_f \in C$ but $x_f  
\not\in \text{int}(\mathcal{o})$. The case when $x_f \in  
\text{bd}(\mathcal{o})$ is trivial, so we assume that $x_f \in  \mathcal{o}^C$, 
and thus $x_f$ is in the open set $D := C \cap \mathcal{o}^C$. Let 
$\mathfrak{d}$ be the set of hyperplane regions that intersect $D$.

Since $\mathcal{o}$ is the closure of a finite number of (open) polytopes, we 
conclude $\text{bd}(D)$ consists of faces of regions in $\mathcal{o}$ and/or 
zero crossings, i.e. $H^0_{\mathcal{T}^{\nn}_R} \cap R$ for regions $R  : R 
\cap D \neq \emptyset$. Note that $\text{bd}(D)$ must have at least one face, 
$F$, that is also face of a region $R \in \mathfrak{d}$; for if not,  it 
contradicts $x_f \in D \subset \text{int}(C)$. Those faces $F \cap \text{bd}(D) 
\neq \emptyset$ which are entirely zero crossings are of no interest to 
\cref{alg:zero_sub_enumeration}.

Now let $F$ be any such face that is a face of $R \subset \mathfrak{o}$ as well 
as $F \cap \text{bd}(D) \neq \emptyset$. We claim that $F$ is contained in a 
hyperplane associated to $\ell_F$ that is flipped for the region $R$ and 
\emph{unflipped} for $D$; for if it were \emph{unflipped} in $R$, then 
\cref{alg:zero_sub_enumeration} would add the region adjacent to $R$ through 
$F$ via a forward pass. Neither can $F$ (via $\ell_F$) correspond to a flipped 
hyperplane for $R$ and also be a fold-back face of $R$: for if $\ell_F$ were as 
such, then a backward pass would add another region to $\mathcal{o}$, which 
contradicts the definition of $\mathfrak{d}$.

At this point, we simply observe that not all shared faces between $D$ and  
$\mathcal{o}$ can correspond to flipped hyperplanes for their adjacent regions 
in $\mathfrak{o}$ and simultaneously not be fold-back faces. For if this were 
so, then the line connecting $x_f$ to  $x_0$ would necessarily go through an 
unflipped hyperplane (w.r.t. $D$), and this is clearly impossible. Thus, $D$ 
must have at least one face in common with a region in $\mathcal{o}$ that is 
either unflipped in $\mathcal{o}$ or else a fold-back face of a region in  
$\mathcal{o}$. In either case, we have a contradiction with the fact that $D$ 
contains regions undiscovered by \cref{alg:zero_sub_enumeration}.
\end{proof}



 %

\bibliographystyle{plain} %
\bibliography{bibliography}


\end{document}